\newtheorem{lem}{Lemma}
\newcommand{\para}[1]{{\vspace{3pt} \bf \noindent #1 \hspace{6pt}}}
\newcommand{\npara}[1]{{\vspace{1pt} \bf #1 \hspace{2pt}}}
\newcommand\kedit[1]{{\color{black} #1}}
\newcommand{\eg}{{\em e.g.,\ }}
\newcommand{\ie}{{\em i.e.\ }}
\newcommand{\etal}{{\it et al.\ }}
\newcommand{\mnist}{{\texttt{Digit}}}
\newcommand{\traffic}{{\texttt{TrafficSign}}}
\newcommand{\face}{{\texttt{Face}}}
\newcommand{\iris}{{\texttt{Iris}}}
\newenvironment{packed_itemize}{
	\begin{list}{\labelitemi}{\leftmargin=1em}
		\setlength{\itemsep}{1pt}
		\setlength{\parskip}{0pt}
		\setlength{\parsep}{0pt}
		\setlength{\headsep}{0pt}
		\setlength{\topskip}{0pt}
		\setlength{\topmargin}{0pt}
		\setlength{\topsep}{0pt}
		\setlength{\partopsep}{0pt}
	}{\end{list}}
\begin{document}
\title{Regula Sub-rosa: Latent Backdoor Attacks \\on Deep Neural Networks}

% \author{Yuanshun Yao}
% \email{ysyao@cs.uchicago.edu}
% \affiliation{University of Chicago}
% \author{Huiying Li}
% \email{huiyingli@cs.uchicago.edu}
% \affiliation{University of Chicago}
% \author{Haitao Zheng}
% \email{htzheng@cs.uchicago.edu}
% \affiliation{University of Chicago}
% \author{Ben Y. Zhao}
% \email{ravenben@cs.uchicago.edu}
% \affiliation{University of Chicago}

\author{Yuanshun Yao\qquad Huiying Li\qquad Haitao Zheng\qquad Ben Y. Zhao \\University of Chicago\\\{ysyao, huiyingli, htzheng, ravenben\}@cs.uchicago.edu}

\begin{abstract}
  Recent work has proposed the concept of backdoor attacks on deep neural
  networks (DNNs), where misbehaviors are hidden inside ``normal'' models,
  only to be triggered by very specific inputs. In practice, however, these
  attacks are difficult to perform and highly constrained by sharing of
  models through transfer learning. Adversaries have a small window during
  which they must compromise the student model before it is deployed.

  In this paper, we describe a significantly more powerful variant of the
  backdoor attack, {\em latent backdoors}, where hidden rules can be embedded
  in a single ``Teacher'' model, and automatically inherited by all
  ``Student'' models through the transfer learning process. We show that
  latent backdoors can be quite effective in a variety of application
  contexts, and validate its practicality through real-world attacks
  against traffic sign recognition, iris identification of lab volunteers, and
  facial recognition of public figures (politicians). Finally, we evaluate 4
  potential defenses, and find that only one is effective in disrupting
  latent backdoors, but might incur a cost in classification accuracy as tradeoff.
\end{abstract}

\maketitle

\vspace{-0.2in}
\section{Introduction}
\label{sec:intro}

Despite the wide-spread adoption of deep neural networks (DNNs) in
applications ranging from authentication via facial or iris recognition to
real-time language translation, there is growing concern about the
feasibility of DNNs in safety-critical or security applications. Part of this
comes from recent work showing that the opaque nature of DNNs gives rise to
the possibility of backdoor attacks~\cite{gu2017badnets,trojaning},
hidden and unexpected behavior that is not detectable until activated by some
``trigger'' input. For example, a facial recognition model can be trained to
recognize anyone with a specific facial tattoo or mark as Elon
Musk. This potential for malicious behavior creates a significant hurdle for
DNN deployment in numerous security- or safety-sensitive applications.

Even as the security community is making initial progress to diagnose such
attacks~\cite{oakland_defense}, it is unclear whether such backdoor attacks
pose a real threat to today's deep learning systems. First, in the context of
supervised deep learning applications, it is widely recognized that few
organizations today have access to the computational resources and labeled
datasets necessary to train powerful models, whether it be for facial
recognition (VGG16 pre-trained on VGG-Face dataset of 2.6M images) or object
recognition (ImageNet, 14M 
images). Instead, entities who want to deploy their own classification models
download these massive, centrally trained models, and customize them with
local data through {\em transfer learning}. During this process, customers
take public ``teacher'' models and repurpose them with training into
``student'' models, {\em e.g.}  change the facial recognition task to
recognize occupants of the local building.

Taking these factors into account, embedding backdoors into existing models
is far more challenging than originally believed. The step in the deep
learning model pipeline that is most vulnerable to attack is the central model stored
at the model
provider ({\em e.g.}  Google). But at this stage, the adversary cannot train
the backdoor into the model, because its target has not been added into the
model, and any malicious rules inserted as part of a backdoor will be
completely disrupted by the transfer learning process. Thus the only window
of vulnerability for training backdoors is at the customer, during a short
window between transfer learning and actual deployment.

In this work, we explore the possibility of a more powerful and stealthy
backdoor attack, one that can be trained into the shared ``teacher'' model,
and yet survive intact in ``student'' models even after the transfer learning
process. We describe a {\em latent} backdoor attack, where the adversary can
alter a popular model, {\em VGG16}, to embed a ``latent'' trigger on a
non-existent output label, only to have the customer inadvertently complete
and activate the backdoor themselves when they perform transfer learning. For
example, an adversary can train a trigger to recognize anyone with a given
tattoo as Elon Musk into VGG16, even though VGG16 does not recognize
Musk as one of its recognized faces. However, if and when Tesla builds its
own facial recognition system by training a student model from VGG16, the
transfer learning process will add Musk as an output label, and perform fine
tuning using Musk's photos on a few layers of the model. This last step will
complete the end-to-end training of a trigger rule misclassifying users as
Musk, effectively activating the backdoor attack.

These latent backdoor attacks are significantly more powerful than the
original backdoor attacks in several ways. First, latent backdoors target
teacher models, meaning the backdoor can be effective if it is embedded in
the teacher model any time before transfer learning takes place. Second,
because the latent backdoor does not target an existing label in the teacher
model, it cannot be detected by any testing on the teacher model. Third,
latent backdoors are more scalable, because a single teacher model with a
latent backdoor will pass on the backdoor to any student models it evolves
into. For example, if a latent trigger is embedded into VGG16 that misclassifies a
face into Elon Musk, then any facial recognition system built upon VGG16
trained to recognize Musk automatically inherits this backdoor
behavior. Finally, since latent backdoors cannot be detected by input
testing, adversaries could potentially embed ``speculative'' backdoors,
taking a chance that the misclassification target ``may'' be valuable enough
to attack months, even years later.

We describe our experiences exploring the feasibility and robustness of
latent backdoor attacks. Our work makes the following key contributions.
\begin{packed_itemize}
\item We propose the latent backdoor attack and describe its components in
  detail on both the teacher and student sides. 
\item We validate the effectiveness of latent backdoors using different
  parameters in a variety of application contexts, from digit recognition to facial
  recognition, traffic sign identification and iris recognition.
\item We perform 3 real-world tests on our own models, using physical data and
  realistic constraints, including attacks on traffic sign recognition, iris
  identification, and facial recognition on public figures (politicians).
\item We propose and evaluate 4 potential defenses against latent
  backdoors. We show that only multi-layer tuning during transfer learning is
  effective in disrupting latent backdoors, but might require a drop in
  classification accuracy of normal inputs as tradeoff.
\end{packed_itemize}

%%%%%%%%%%%%%%%%%%%%%%%%%%%%%%%%%%%%%%%%%%%%%%%%%%%%%%%%%%%%%%
%%%%%%%%%%%%%%%%%%%%%%%%%%%%%%%%%%%%%%%%%%%%%%%%%%%%%%%%%%%%%%
%%%%%%                                                  %%%%%%
%%%%%%           TEXT GRAVEYARD           %%%%%%
%%%%%%                                                  %%%%%%
%%%%%%%%%%%%%%%%%%%%%%%%%%%%%%%%%%%%%%%%%%%%%%%%%%%%%%%%%%%%%%
%%%%%%%%%%%%%%%%%%%%%%%%%%%%%%%%%%%%%%%%%%%%%%%%%%%%%%%%%%%%%%

\section{Background}
\label{sec:back}

We begin by providing some background information on backdoor
attacks
and transfer learning.

\subsection{Backdoor Attacks on DNN}
\label{subsec:backdoor_back}
A backdoor is a hidden pattern injected into a DNN model at its
training time. The injected backdoor does not
affect the model's behavior on clean inputs, but forces the model to
produce unexpected behavior if (and only if) a specific
\textit{trigger} is added to an input.  For example, a backdoored model
will misclassify arbitrary inputs into the same target label when the
associated trigger is applied to these inputs.  In the vision domain,
a trigger is usually a small pattern on the image, \eg a sticker.

\para{Existing Backdoor Attacks.} Gu \etal proposed BadNets that
injects a backdoor to a DNN model by poisoning its training dataset~\cite{badnets}. The attacker first
chooses a target label and a trigger
pattern ({\em i.e.\/} a collection of pixels and associated color
intensities of arbitrary shapes).  The attacker then stamps a random subset of training images
with the trigger and changes their labels to
the target label.  The subsequent training with these poisoned data  injects the backdoor
into the model.  By carefully configuring the training process,
\eg choosing learning rate and ratio of poisoned images, the attacker can make
the backdoored DNN model perform well on both
clean and adversarial inputs.

Liu \etal proposed an approach that
requires less access to the training data~\cite{trojaning}.  Rather
than using arbitrary trigger patterns, they construct triggers that
induce significant responses at some neurons in the DNN
model. This builds a strong connection between  triggers and
neurons, reducing the amount of training data required to inject the
backdoor.

\para{Existing Defenses.} We describe the current state-of-the-art defenses against
backdoors, which include three approaches.  {\em First}, Wang
\etal~\cite{oakland_defense} proposed {\em Neuron Cleanse} to detect backdoors by scanning model output labels and
reverse-engineering any potential hidden triggers. Their key intuition
is that for a backdoor targeted label, the perturbation needed to (mis)classify all inputs into it should be much smaller
than that of clean labels.  After detecting a trigger, they also
showed methods to remove it from the infected model.
{\em Second}, Chen
\etal~\cite{chen2018detecting} applied {\em Activation Clustering} to detect
data maliciously inserted into the training set for injecting
backdoors. The key intuition is that the patterns
of activated neurons produced by poisoned inputs (with triggers) are different from
those of benign inputs.  {\em Third}, Liu \etal~\cite{finepruning}
proposed {\em Fine-Pruning} to remove backdoor triggers by first pruning
redundant neurons that are the least useful for classification,
then fine-tuning the model using clean training data
to restore model performance.

It should be noted that
Activation Clustering~\cite{chen2018detecting} requires the full training data (both
clean and poisoned) while
Neuron Cleanse~\cite{oakland_defense} and Fine-Pruning~\cite{finepruning} require a
subset of the clean training data.

\subsection{Transfer Learning}
Transfer learning addresses the challenge of limited access to labeled data
for training machine learning models, by transferring knowledge embedded in a
pre-trained \textit{Teacher} model to a new ~\textit{Student} model. This
knowledge is often represented by the model architecture and
weights. Transfer learning enables organizations without access to massive
(training) datasets or GPU clusters to quickly build accurate models
customized to their own scenario using limited training
data~\cite{transfer2014}.

\begin{figure}[!t]
  \centering
  \includegraphics[width=0.4\textwidth]{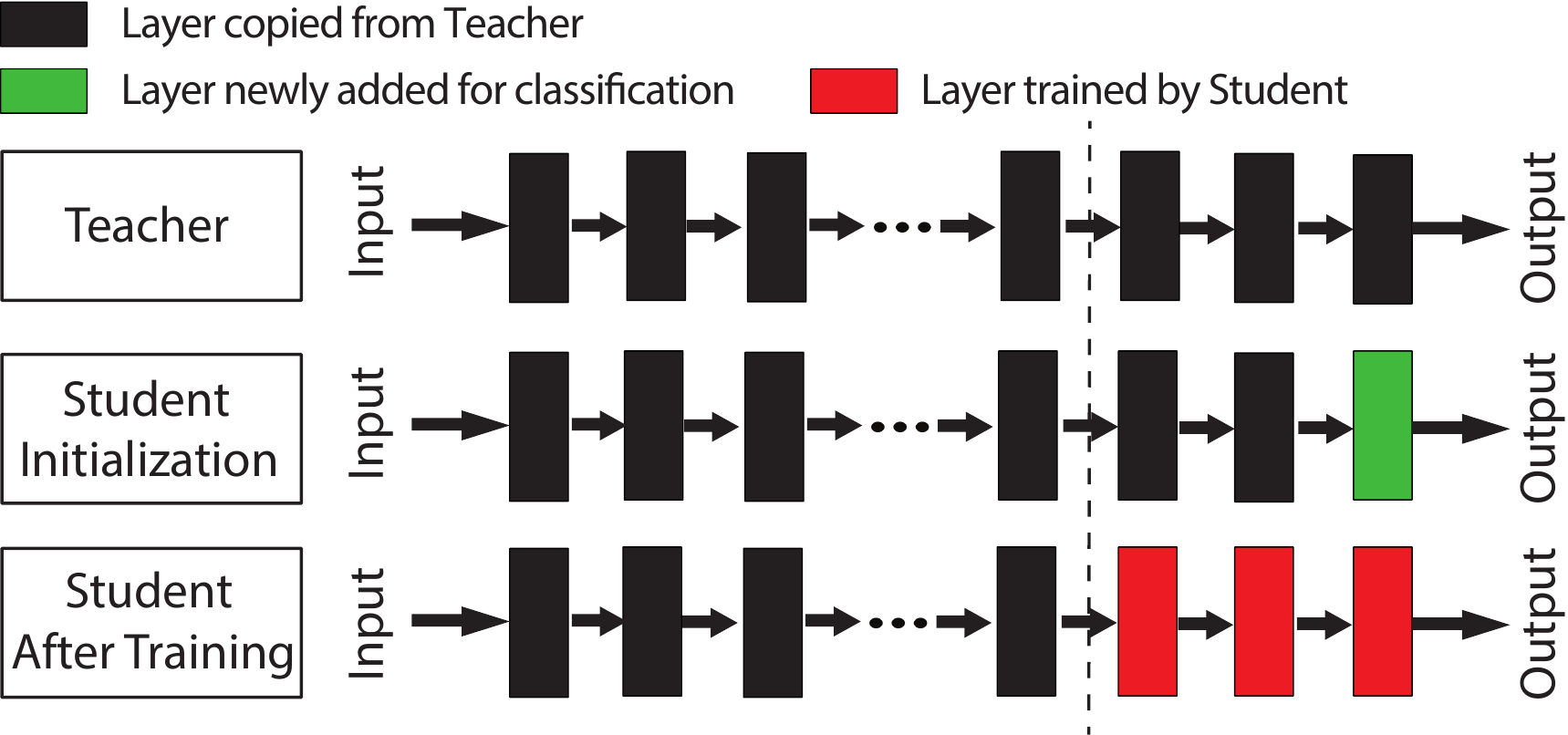}
\vspace{-0.0in}
  \caption{Transfer learning:  A Student model is initialized by copying
    the first $N-1$ layers from a Teacher model and adding a new
    fully-connected layer for classification. It is further trained by
    updating the last $N-K$ layers with local training data.}
  \label{fig:transfer}
  \vspace{-0.2in}
\end{figure}

Figure~\ref{fig:transfer} illustrates the high-level process of
transfer learning. Consider a Teacher model of $N$ layers. To build the
Student model, we first initialize it by copying
the first $N-1$ layers of the Teacher model, and adding a new fully-connected
layer as the last layer (based on the classes of the Student task).
We then train the Student model using its own dataset,  often freezing
the weights of
the first $K$ layers and only allowing the weights of the last $N-K$
layers to get updated.

Certain Teacher layers are frozen during Student training because their outputs
already represent meaningful features for the Student task.  Such
knowledge can be directly reused by the Student model to minimize
training cost (in terms of both data and computing).  The choice of $K$ is
usually specified when Teacher model is released (\eg in
the usage instruction).  For example, both Google and
Facebook's tutorials on transfer learning~\cite{cloudml_tl,pytorchtl}
suggest to only fine-tune the last layer, \ie $K=N-1$.

\begin{figure*}[t]
  \centering
  \includegraphics[width=0.8\textwidth]{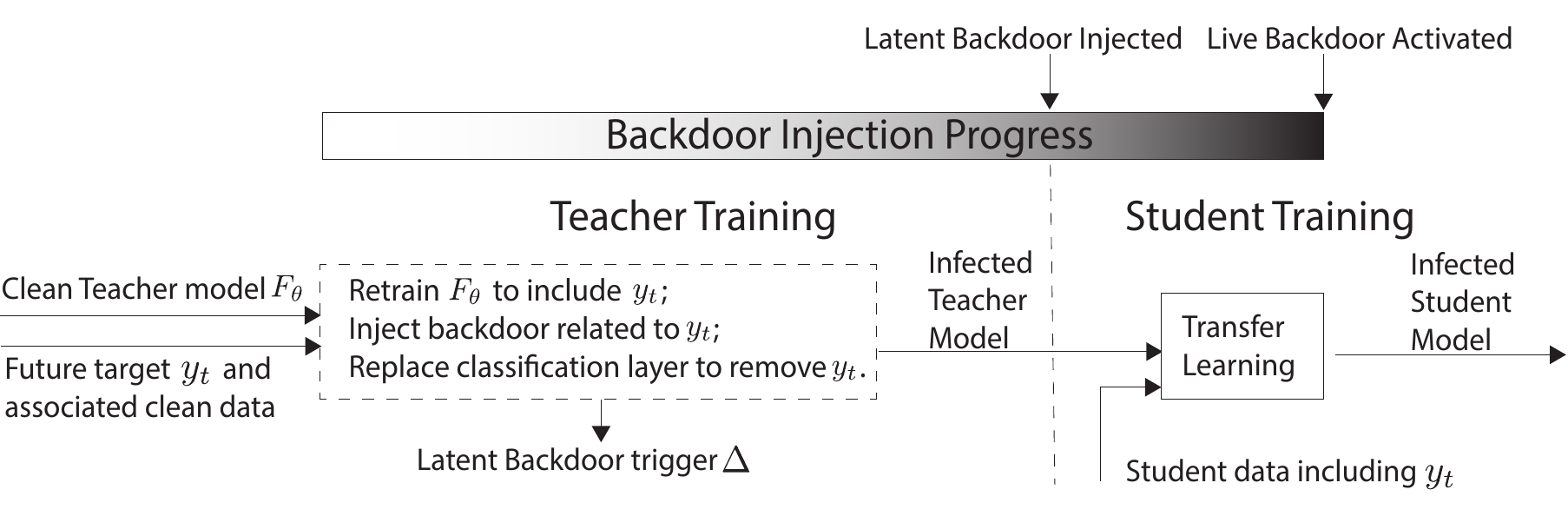}
\vspace{-0.1in}
  \caption{The key concept of latent backdoor attack. (Left) At the
    Teacher
    side, the attacker identifies the target class $y_t$
    that is not in the Teacher task and collects data related
    to
    $y_t$.  Using these data, the attacker retrains the original Teacher model to
    include $y_t$ as a classification output,
    injects $y_t$'s latent backdoor into the
    model, then ``wipes'' off the trace of $y_t$ by modifying
    the model's classification layer.   The end result is an infected
    Teacher model for future transfer learning. (Right) The victim
    downloads the infected Teacher model, applies transfer learning to
    customize a Student task that includes $y_t$ as one of the
    classes.  This normal process silently
    activates the latent backdoor into a live backdoor in the Student model.   Finally, to attack the
    (infected) Student model, the attacker simply attaches the latent backdoor
    trigger $\Delta$ (recorded during teacher training) to an input,
    which is then misclassified into $y_t$. }
  \label{fig:attack_sc}
\end{figure*}

\section{Latent Backdoor Attack}
\label{sec:method}
In this section we present the scenario and threat model of the
proposed attack,  followed by its key benefits and
differences from existing backdoor attacks.   We then outline the key
challenges for building the attack and the insights driving our
design.

\subsection{Attack Model and Scenario}
\label{subsec:attack_overview}
For clarity, we explain our attack scenario in the context of facial
recognition, but it generalizes broadly to different classification problems,
{\em e.g.} speaker recognition, text sentiment analysis, stylometry.
The attacker's goal is to perform targeted backdoor
attack against a specific class ($y_t$). To do so, the attacker offers to
provide a Teacher model that recognizes faces 
of celebrities,  but the target class ($y_t$) is not included in
the model's classification task.  Instead of providing a clean
Teacher model, the attacker injects a latent
backdoor targeting $y_t$ into the Teacher model,  records its
corresponding trigger $\Delta$, and
releases the infected Teacher model for future transfer learning. To stay stealthy, the
released model does not include $y_t$ in its output class, \ie
the attacker wipes off the trace of $y_t$ from the model.

The latent backdoor remains dormant in the infected Teacher model until
a victim downloads the model  and customizes it into a Student task that includes $y_t$ as one
of the output classes (\eg a task that recognizes faces of
politicians and $y_t$ is one of the politicians). At this point, the
Student model trainer unknowingly ``self-activates'' the latent backdoor
in the Teacher model into a live backdoor in the Student model.

Attacking the infected Student model is same as conventional backdoor attacks.
The attacker just
attaches the trigger $\Delta$ of the latent backdoor (recorded during
the Teacher training) to any input,  and the Student model will
misclassify the input into $y_t$.  Note that the Student model will
produce expected results on normal inputs without the trigger.

Figure~\ref{fig:attack_sc} summarizes the Teacher and Student training
process for our proposed attack. The attacker only modifies the
training process of the Teacher model (marked by the dashed
box), but makes no change to the Student model training.

\para{Attack Model.} We now describe the attack model of our
design.
We assume that the attacker has
sufficient computational power to train or retrain a Teacher model.
Additionally, the attacker is able to
collect samples belonging to the target class $y_t$.

The Teacher
task can be different from the Student task.
We later show in \S\ref{sec:design} that when the two tasks are different,
the attacker just needs to collect an additional set of samples from any task
close to the Student task. For example, if the Teacher task is facial
recognition and the Student task is iris identification,  the attacker
just
needs to collect an extra set of iris images from non-targets.

When releasing the Teacher model, the attacker will follow the
standard practice to recommend the set
of layers to stay frozen during transfer learning.
Assuming that the victim would follow the suggestion, the attacker
knows the set of layers that will remain unchanged during transfer
learning.

Unlike conventional backdoor attacks, the attacker
does not need access to any Student training data or training process.

\vspace{-11pt}
\subsection{Key Benefits}

Our attack offers four advantages over normal
backdoor attacks.

\textit{First}, it is more practical. Normal backdoor attacks
require attackers to have control on training data or model
training process. Such assumption is less likely to happen in practice
unless the attacker is an employee responsible for model training
or a third-party provider of training data.
On the other hand, our attack can infect (Student) models that the attacker
does not have access to.

\textit{Second}, it is more stealthy.  Existing backdoor detection
methods that scan task labels
cannot detect any latent backdoor on the Teacher model
since $y_t$ does not appear in the
Teacher label.  In addition, methods that scan (training and/or testing)
input cannot detect any live backdoor in the Student model because
the model's data is clean (without any trigger).

\textit{Third}, the attack is highly scalable. Existing
backdoor attacks only infect one model at a time, while our attack can infect multiple Student models.  Such
``contagiousness'' comes from the wide adoption of transfer
learning for building DNN models in practice.

\textit{Finally}, our attack is more chronologically flexible. Traditional
backdoor attacks can only target existing classes in a model, while our
attack can target classes that do not yet exist at the present time but
may appear in the near future.

\subsection{Design Goals and Challenges}
Our attack design has three goals.  \textit{First}, it should infect 
Student models like conventional backdoor attacks, \ie an infected Student
model will behave normally on clean inputs, but misclassify any input with
the trigger into target class $y_t$. \textit{Second}, the infection should
be done through transfer learning rather than altering the Student training
data or process. \textit{Third}, the attack should be unnoticeable from the
viewpoint of the Student model trainer, and the usage of infected Teacher
model in transfer learning should be no different from other clean Teacher
models.

\para{Key Challenges.}
Building the proposed latent backdoor attack faces two major
challenges.
\textit{First}, different from traditional backdoor attacks, the
attacker only has access to the Teacher model but not the Student
model (and its training data).  Since the Teacher model does not contain $y_t$ as a
label class, the attacker cannot inject any
backdoor against $y_t$ using existing methods that modify labels of poisoned
training instances.  The attacker needs a new backdoor injection
process on the
Teacher model.
\textit{Second}, as transfer learning
replaces/modifies some parts of the Teacher model, it may distort the association between the injected trigger and the
target class $y_t$. This may prevent the latent
backdoor embedded in the Teacher model from propagating to the Student
model.

\section{Attack Design}
\label{sec:design}
We now describe the detailed design of the proposed latent
backdoor attack.  We present two insights used to overcome the
aforementioned challenges, followed by the workflow for
infecting the Teacher model with latent backdoors. Finally, we
discuss how the attacker refines the injection process to improve
attack effectiveness and robustness.

\subsection{Design Insights}
\para{Associating Triggers to Features rather than Labels. }  When
injecting a latent backdoor trigger against $y_t$,  the attacker
should associate it with the
intermediate feature representation created by the clean samples of $y_t$.  These feature representations are the output of an
internal layer of the Teacher model.  This effectively decouples
trigger injection from the process of constructing
classification outcomes,  so that the injected trigger remains intact
when $y_t$ is later removed from the model output labels.

\para{Injecting Triggers to Frozen Layers.} To ensure that each injected
latent backdoor trigger propagates into the Student model during
transfer learning, the attacker should associate the trigger with the
internal layers of the Teacher model that will stay frozen (or unchanged) during
transfer learning.  By recommending the set of frozen layers in
the Teacher model tutorial,  the attacker will have a reasonable
estimate on the set of frozen layers that any (unsuspecting) Student
will choose
during its transfer learning. Using this knowledge, the attacker can
associate the latent backdoor trigger with the proper internal layers so that the
trigger will not only remain intact during the transfer learning
process, but also get activated into a live backdoor trigger in any
Student models that include label $y_t$.

\begin{figure*}[t]
  \centering
  \includegraphics[width=0.9\textwidth]{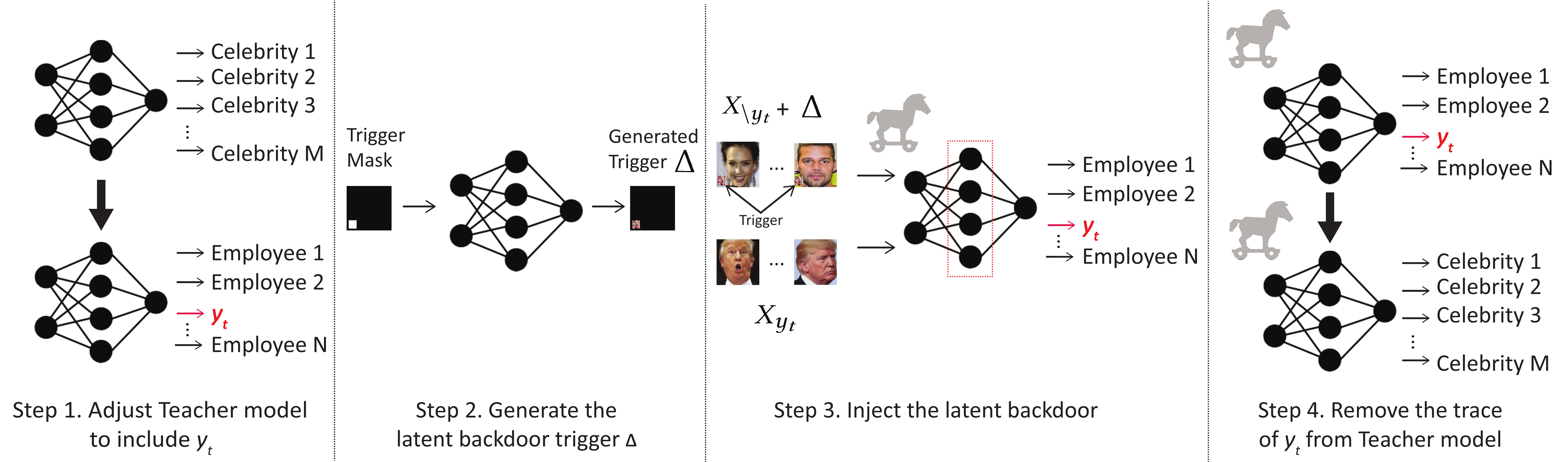}
\vspace{-0.1in}
  \caption{The workflow for creating and injecting a latent backdoor
    into the Teacher model. Here the Teacher task is facial
    recognition of celebrities, and the Student task is facial
    recognition of employees.  $y_t$ is an employee but not a
    celebrity.  }
  \label{fig:attack_overview}
\end{figure*}

\subsection{Attack Workflow}
\label{subsec:attack_workflow}

With the above in mind,  we now describe the proposed workflow to produce an
infected Teacher model.  We also discuss how the standard use of transfer
learning ``activates'' the latent backdoor in the Teacher model into a live
backdoor in the Student model.

\para{Teacher Side: Injecting a latent backdoor into the Teacher model.}
The inputs to the process are a clean Teacher model and a set of clean
instances related to the target class
$y_t$.  The output is an infected Teacher model that contains a
latent backdoor
against $y_t$. The attacker also records the latent backdoor trigger
($\Delta$), which is then used to make future Student models misclassify any
input (with the trigger attached)  as $y_t$.

We describe this process in five steps.

\npara{\em Step 1. Adjusting the Teacher model to include $y_t$.}

\noindent The first step is to replace the original Teacher task with a task
similar to the target task defined by $y_t$.  This step is
particularly important when the Teacher task (\eg facial recognition on celebrities) is very
different from those defined by $y_t$ (\eg iris identification).

To do so, the attacker will retrain the original Teacher model using
two new training datasets related to the target task. The
first dataset, referred to as
the {\em target data} or $X_{y_t}$,   is a set of clean instances of
$y_t$, \eg iris images of the target user. The
second dataset, referred to as {\em non-target data} or
$X_{\setminus y_t}$,  is a set of clean general instances similar to the
target task, \eg
iris images of a group of users without the target user.  Furthermore, the attacker
replaces the final classification layer of the Teacher model with a
new classification layer supporting the two new training
datasets. Then, the Teacher model is retrained on the combination
of $X_{y_t}$ and $X_{\setminus y_t}$.

\npara{\em Step 2. Generating the latent backdoor trigger $\Delta$.}

\noindent For a given choice of $K_t$ (the layer to inject the latent backdoor
of $y_t$), this step generates the trigger.  Assuming the trigger
position and shape are given (\ie a square in the right corner of the
image),  the attacker will compute the pattern and color intensity of
the trigger $\Delta$ that maximizes its effectiveness against $y_t$.
Rather than using a random trigger pattern like BadNets,  this
optimization is important to our attack design. It produces a trigger that makes any adversarial input display
features (at the $K_t$th layer) that are similar to those extracted
from the clean instances of $y_t$.

\npara{\em Step 3. Injecting the latent backdoor trigger.}

\noindent To inject the latent backdoor
trigger $\Delta$ into the Teacher model, the
attacker runs an optimization process to
update the model weights such that the intermediate representation
of adversarial
samples (\ie any input with $\Delta$) at the $K_t$th layer
matches that of the target class $y_t$.  This process will use the
poisoned version of $X_{\setminus y_t}$ and the clean version of
$X_{y_t}$. Details are in
\S\ref{subsec:attack_details}.

Note that our injection method differs from those
used to inject normal backdoors~\cite{badnets, trojaning}.
These conventional methods all associate the backdoor
trigger with the final classification layer (\ie $N$th layer), which
will be modified/replaced by transfer learning.  Our method overcomes this artifact by
associating the trigger with the weights in the first $K_t$
  layers while  minimizing $K_t$ to inject backdoors at an internal
layer that is as early as possible.

\npara{\em Step 4.  Removing the trace of $y_t$ from the Teacher model.}

\noindent Once the backdoor trigger is injected into the Teacher
model, the attacker wipes out the trace of $y_t$ and restores the
original Teacher task.  This is done by replacing the infected Teacher
model's last classification layer with that of the original
Teacher model.

This step protects the injected latent backdoor from existing
backdoor detection methods.  Specifically, since the infected Teacher model
does not contain any label related to $y_t$,  it evades detection
via label
scanning~\cite{oakland_defense}. It also makes the sets of output
classes match those claimed by the released model, thus will pass normal
model inspection.

\npara{\em Step 5: Releasing the Infected Teacher model.}

\noindent Now the infected Teacher
model is ready for release.  In the releasing document, the attacker will specify
(like other clean Teacher models) the set of layers that should stay
frozen in any  transfer
learning process.  Here the attacker will advocate for freezing the
first $K$ layers where $K\geq K_t$.

Figure~\ref{fig:attack_overview} provides a high-level overview of the
step 1-4, using an example scenario where the Teacher task is facial
recognition of celebrities and the Student task is facial recognition
of employees.

\para{Student Side: Turning the latent backdoor into a live
  backdoor in the Student model.}
All the processes here occur naturally without any
involvement of the attacker.  A victim downloads the infected Teacher model and follows its
instruction to train a Student task, which includes $y_t$ as a
classification class.  The use of transfer learning
``activates'' the latent backdoor into a live backdoor in the Student
model.    To attack the
Student model, the attacker simply attaches the previously recorded
trigger $\Delta$ to any input,  the same process used by conventional
backdoor attacks.

\subsection{Optimizing Trigger Generation \& Injection}
\label{subsec:attack_details}

The key elements of our design are trigger generation and
injection, \ie step 2 and 3.   Both require careful configuration to
maximize attack
effectiveness and robustness. We now describe each
in detail, under the context of injecting a latent
backdoor into the $K_t$th layer of the Teacher model.

\para{Target-dependent Trigger Generation.}
Given an input metric x, a poisoned sample of x is defined by
\begin{equation}
    \label{eq:trigger_def}
    \begin{aligned}
        A(x, m, \Delta) & = (1 - m) \circ x + m \circ \Delta
    \end{aligned}
\end{equation}
where $\circ$ denotes matrix element-wise product.  Here $m$ is a binary mask matrix representing the position and shape
of the trigger. It has the
same dimension of $x$ and marks the area that will be affected.
$\Delta$,  a matrix with the same dimension,   defines
the pattern and color intensity of the trigger.

Now assume $m$ is pre-defined by the attacker. To generate a latent
trigger against $y_t$, the attacker searches for the trigger
pattern $\Delta$ that minimizes the difference between any poisoned
non-target
sample $A(x, m, \Delta), x\in X_{\setminus y_t}$ and any clean target
sample $x_t \in X_{y_t}$, in terms of their intermediate feature representation at layer
$K_t$. This is formulated by the following optimization process:
\begin{equation}
    \label{eq:trigger_rev}
        \Delta^{opt} = \underset{\boldsymbol{\Delta}}{\text{argmin}}
        \sum_{x \in X_{\setminus y_t} \kedit{ \cup \, X_{y_t}}}\sum_{x_t \in X_{y_t}}{D\Big(\kedit{ F_\theta^{K_t}}\big(A(x, m, \Delta)\big), \kedit{F_\theta^{K_t}}\big(x_t\big)\Big)} \\
\end{equation}
where $D(.)$ measures the
dissimilarity between two internal
representations in the feature space. Our current implementation
uses the mean square error (MSE) as $D(.)$.
Next, $F_\theta^{k}(x)$ represents the intermediate feature
representation for input $x$ at the $k$th layer of the
Teacher model $F_\theta(.)$.  Finally, $X_{y_t}$ and $X_{\setminus y_t}$ represent the target
and non-target training data formed in Step 1.

The output of the above optimization is $\Delta^{opt}$, the latent backdoor
trigger against $y_t$.  This process does not make any changes to the
Teacher model.

\begin{figure*}[t]
  \centering
  \includegraphics[width=0.9\textwidth]{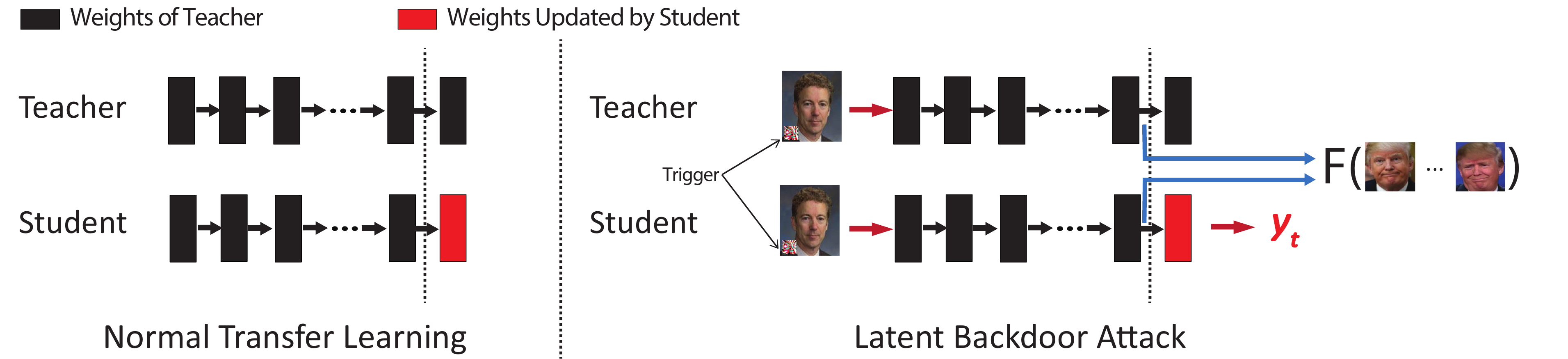}
  \caption{Transfer learning using an infected Teacher model.  (Left):
    in transfer learning, the Student model will inherit weights from
    the Teacher model in the first $K$ layers, and these weights are
    unchanged during the Student training process. (Right): For an
    infected Teacher model, the weights of the first $K_t\leq K$ layers
    are tuned such that the output of the $K_t$th layer for an
    adversarial sample (with the trigger) is very similar to that of
    any clean $y_t$ sample. Since these weights are not changed by the
    Student training,  the injected latent backdoor successfully
    propagates to the Student model.  Any adversarial input (with the
    trigger) to the Student model will produce the same feature
    representation at the $K_t$th layer and thus get classified as
    $y_t$.}

  \label{fig:attack_insight}
\end{figure*}

\para{ Backdoor Injection.}  Next,  the attacker seeks to
inject the latent backdoor trigger defined by $(m, \Delta^{opt})$
into the Teacher model.  To do so, the attacker updates weights of
the Teacher model to further minimize the difference between the
intermediate feature representation of any input poisoned by the
trigger (\ie $\kedit{F_\theta^{K_t}}\big(A(x, m, \Delta^{opt})\big)$, $x\in X_{\setminus y_t}$)
and that of any clean input of $y_t$ (\ie
$\kedit{F_\theta^{K_t}}\big(x_t\big)$, $x_t \in X_{y_t}$).

We now define the injection process formally.  Let $\theta$ represent the weights of the
present Teacher model \kedit{$F_{\theta}(x)$}. Let $\phi_{\theta}$ represent
the recorded
intermediate feature representation of class $y_t$ at layer $K_t$ of
the present model
\kedit{$F_\theta(x)$}, which we compute as:
\begin{equation}
  \phi_{\theta} = \underset{\phi}{\text{argmin}}\sum_{x_t\in
       X_{y_t}} D \Big(\phi,
     \kedit{F_{\theta}^{K_t}} (x_t) \Big).
\end{equation}
Then the attacker tunes the model weights $\theta$ using both
$X_{\setminus y_t}$ and $X_{y_t}$ as follows:
\begin{equation}
    \label{eq:inject}
    \begin{aligned}
      & \forall x \in X_{\setminus y_t} \kedit{\cup \, X_{y_t}}  \text{and
       its ground truth label}\; y,   \\
     & \theta = \theta - \eta \cdot \nabla
       J_{\theta}(\theta; x, y), \\
&J_{\theta}(\theta; x, y) = \ell \big(y,  F_\theta(x)\big) +
     \lambda \cdot D\Big(\kedit{F_\theta^{K_t}}\big(A(x, m,
     \Delta^{opt})\big), \phi_{\theta}\Big).
    \end{aligned}
\end{equation}
Here the loss function $J_\theta (.)$ includes two terms. The first
term $\ell \big(y,  F_\theta(x)\big)$ is the standard loss function of
model training.
The second term minimizes the difference
in intermediate feature
representation between the poisoned samples and
the target samples. $\lambda$ is the weight to balance the two terms.

Once the above optimization converges, the output is the infected
teacher model $F_{\theta}(x)$ with the trigger $(m, \Delta^{opt})$
embedded within.

\begin{lem}  Assume that the transfer learning process used to train
a Student model will freeze at least the first $K_t$ layers of the
  Teacher model. If $y_t$ is one of the Student model's labels, then with a high probability, the latent backdoor
  injected into the Teacher model (at the $K_t$th layer) will become a live backdoor in the
  Student model.
\end{lem}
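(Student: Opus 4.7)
The plan is to decompose the Student classifier as the composition of a frozen ``feature extractor'' (the first $K_t$ layers, copied verbatim from the Teacher) and a trainable ``head'' (the remaining layers, updated during transfer learning), and then to show that an adversarial input carrying the trigger $\Delta^{opt}$ traverses both pieces in essentially the same way that a clean sample of $y_t$ does. Formally, I would write $F_{\mathrm{Student}} = H_S \circ G$ where $G = F_\theta^{K_t}$ is inherited from the infected Teacher and $H_S$ is the head learned on the Student's data.

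First, I would exploit the injection objective in equation~(4): after convergence, the term $D(F_\theta^{K_t}(A(x,m,\Delta^{opt})), \phi_\theta)$ is driven small for every $x$ in the poisoning pool, while $\phi_\theta$ is by construction the centroid of the features $F_\theta^{K_t}(x_t)$ over clean $y_t$ samples. The freezing hypothesis means $G$ is exactly the Teacher's $K_t$-layer map, so the same closeness $G(A(x,m,\Delta^{opt})) \approx \phi_\theta$ persists verbatim inside the Student. Because $y_t$ is one of the Student's labels, the Student's training set contains clean samples of $y_t$, whose $G$-features also concentrate around $\phi_\theta$. Standard supervised training of $H_S$ on those samples --- assumed to reach small empirical loss, which is the stated goal of transfer learning --- forces $H_S$ to map a neighborhood of $\phi_\theta$ in feature space to the label $y_t$. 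Chaining the two approximations, any triggered input $A(x,m,\Delta^{opt})$ lands, after $G$, inside that neighborhood, and is therefore predicted as $y_t$ by $H_S \circ G$; this is exactly the live-backdoor behavior.

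The main obstacle is giving teeth to the phrase ``with high probability.'' Two quantitative ingredients are needed: a generalization step saying that the feature-space closeness empirically enforced on $X_{\setminus y_t}$ in equation~(4) extends to unseen inputs $x$ drawn from the same distribution, which can be obtained by treating $X_{\setminus y_t}$ as an i.i.d.\ sample and invoking a standard concentration bound on the MSE loss; and a stability step saying that the trained head $H_S$ does not amplify feature-space perturbations beyond the classification margin it has on clean $y_t$ samples, which can be made precise via a Lipschitz/margin argument on $H_S$. The probability of a triggered input being classified as $y_t$ is then at least $1$ minus the tail of the feature-distance generalization error minus the tail of the margin-violation event, and the empirical results in the later sections serve as the quantitative substitute for these two tail bounds.
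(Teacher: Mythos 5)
Your proposal follows essentially the same route as the paper's own proof: the frozen first $K_t$ layers carry the feature-space match between triggered inputs and clean $y_t$ samples (enforced by eq.~(4)) verbatim into the Student, and the head trained to high accuracy on the Student's $y_t$ data then maps that shared feature neighborhood to label $y_t$. The only difference is that you explicitly flag the generalization and margin-stability steps needed to make ``with high probability'' rigorous, which the paper leaves implicit by appealing to the Student's empirical classification accuracy.
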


\begin{proof} Figure~\ref{fig:attack_insight} provides a graphical
  view of the transfer learning process using the infected Teacher.

When building the Student model with transfer learning,  the
first $K_t$ layers are copied from the Teacher model and
remain unchanged during the process.  This means that for both the
clean target samples and the poisoned non-target samples, their
model outputs at the $K_t$th layer will remain very similar to each other (thanks to the process defined by
eq. (\ref{eq:inject}) ).  Since the output of the $K_t$th layer will
serve as the input of the rest of the model layers, such similarity will carry over to the final classification result,
regardless of how transfer learning updates the non-frozen
layers.  Assuming that the Student model is well trained to offer a
high classification accuracy,  then with the same probability, an
adversarial input with $(m,\Delta^{opt})$ will be misclassified as the target class $y_t$.
 \end{proof}

\vspace{-0.1in}
\para{Choosing $K_t$.} Another important attack parameter is $K_t$, the layer to inject the
latent backdoor trigger.  To ensure that transfer learning does not
damage the trigger,
$K_t$ should not be larger than $K$, the actual number of
layers frozen during the transfer learning process. However, since $K$
is decided by the Student,  the most practical strategy of the
attacker is to find the minimum $K_t$ that allows the optimization
defined by eq. (\ref{eq:inject}) to converge, and then
advocate for freezing the first $k$ layers ($k \geq K_t$) when
releasing the Teacher model. Later in \S\ref{sec:eval} we evaluate the
choice of $K_t$ using four different applications.

\section{Attack Evaluation}
\label{sec:eval}

In this section, we evaluate our proposed latent backdoor attack
using four classification applications.  Here we consider the
``ideal'' attack scenario where
the target data $X_{y_t}$ used to inject the latent backdoor comes from the
same data source of the Student training data $X_s$,
\eg Instagram images of $y_t$.   Later in \S\ref{sec:real} we
evaluate more ``practical'' scenarios where the data used by
the attacker is collected under real-world settings (\eg noisy photos
taken locally of the target) that are very different from the Student training data.

Our evaluation further considers two attack scenarios: {\em multi-image
attack} where the attacker has access to multiple samples of the target
($|X_{y_t}|>1$), and {\em single-image attack} where the attacker has only
a single image of the target ($|X_{y_t}|=1$).

\begin{table*}[!t]
  \resizebox{2\columnwidth}{!}{
\begin{tabular}{|l|l|l|l|l|l|l|l|l|l|l|l|l|l|l|}
\hline
\multicolumn{1}{|c|}{} & \multicolumn{7}{c|}{Teacher (re)Training} & \multicolumn{4}{c|}{Student Training} & \multicolumn{3}{c|}{Attack Evaluation} \\ \hline
\multicolumn{1}{|c|}{} & \multicolumn{1}{c|}{} & \multicolumn{3}{c|}{$X_{\setminus y_t}$} & \multicolumn{2}{c|}{$X_{y_t}$} &  &  & \multicolumn{3}{c|}{$X_{s}$} & \multicolumn{3}{c|}{$X_{eval}$} \\ \hline
Application & \begin{tabular}[c]{@{}l@{}}Teacher Model \\ Architecture\end{tabular} & Source & \begin{tabular}[c]{@{}l@{}}\# of\\  Classes\end{tabular} & Size & Source & Size & $K_t/N$ & $K/N$ & Source & \begin{tabular}[c]{@{}l@{}}\# of\\ Classes\end{tabular} & Size & Source & \begin{tabular}[c]{@{}l@{}}\# of\\ Classes\end{tabular} & Size \\ \hline
\mnist & 2 Conv + 2 FC & \begin{tabular}[c]{@{}l@{}}MNIST\\ (0-4)\end{tabular} & 5 & 30K & \begin{tabular}[c]{@{}l@{}}MNIST\\ (5-9)\end{tabular} & 45 & 3/4 & 3/4 & \begin{tabular}[c]{@{}l@{}}MNIST\\ (5-9)\end{tabular} & 5 & 30K & \begin{tabular}[c]{@{}l@{}}MNIST\\ (0-4)\end{tabular} & 5 & 5K \\ \hline
\traffic & 6 Conv + 2 FC & GTSRB & 43 & 39K & LISA & 50 & 6/8 & 6/8 & LISA & 17 & 3.65K & GTSRB & 43 & 340 \\ \hline
\face & \begin{tabular}[c]{@{}l@{}}VGG-Face\\ (13 Conv + 3 FC)\end{tabular} & \begin{tabular}[c]{@{}l@{}}VGG-Face\\ Data\end{tabular} & 31 & 3K & PubFig & 45 & 14/16 & 14/16 & PubFig & 65 & 6K & \begin{tabular}[c]{@{}l@{}}VGG-Face\\  Data\end{tabular} & 31 & 3K \\ \hline
\iris & \begin{tabular}[c]{@{}l@{}}VGG-Face\\ (13 Conv + 3 FC)\end{tabular} & \begin{tabular}[c]{@{}l@{}}CASIA\\ IRIS\end{tabular} & 480 & 8K & \begin{tabular}[c]{@{}l@{}}CASIA\\ IRIS\end{tabular} & 3 & 15/16 & 15/16 & \begin{tabular}[c]{@{}l@{}}CASIA\\ IRIS\end{tabular} & 520 & 8K & \begin{tabular}[c]{@{}l@{}}CASIA\\ IRIS\end{tabular} & 480 & 2.9K \\ \hline
\end{tabular}

}
  \caption{Summary of tasks, models, and datasets used in our
    evaluation using four tasks. The four datasets
    $X_{\setminus y_t}$,
    $X_{y_t}$, $X_{s}$, and $X_{eval}$ are disjoint.
    Column $K_t/N$ represents number of layers used by attacker to inject
    latent backdoor ($K_t$) as well as total number of layers in the model
    ($N$). Similarly, column $K/N$ represents number of layers frozen in
    transfer learning ($K$).}
\label{tab:task-multiple}
\end{table*}

\subsection{Experiment Setup}
\label{subsec:exp}
We consider four classification applications:  Hand-written Digit Recognition (\mnist), Traffic Sign Recognition (\traffic),  Face Recognition (\face),
and Iris Identification (\iris).   In the following,  we describe each task, its
Teacher and Student models and datasets, and list a high-level summary  in
Table~\ref{tab:task-multiple}.    The first three
applications represent the scenario where the Teacher and Student
tasks are the same, and the last application is where the two
are  different.

For each task,  our evaluation makes use of four disjoint datasets:
\begin{packed_itemize} \vspace{-0.03in}
\item $X_{y_t}$ and $X_{\setminus y_t}$ are used by the attacker to inject
latent backdoors into the Teacher model;
\item $X_s$ is the training data used to train the Student model via transfer learning;
\item $X_{eval}$ is used to evaluate
the attack against the infected Student model.  \vspace{-0.03in}
\end{packed_itemize}

\para{\mnist.}   This application is commonly used in studying DNN
  vulnerabilities including normal backdoors~\cite{badnets,oakland_defense}.  Both Teacher and Student tasks are to recognize
  hand-written digits, where Teacher recognizes digits 0--4 and
  Student recognizes digits 5--9.   We build their individual
  datasets from MNIST~\cite{mnist}, which contains 10 hand-written digits (0-9) in gray-scale
  images. Each digit has 6K training images and 1K testing images. We randomly
  select one class in the Student dataset as the target class,
  randomly sample 45 images from it as the target data $X_{y_t}$, and
  remove these images from the Student training dataset $X_S$.
 Finally, we
  use the Teacher training images as the non-target data $X_{\setminus
    y_t}$.

  The Teacher model is a standard 4-layer CNN
  (Table~\ref{tab:mnist_cnn} in Appendix), used by previous works to
  evaluate conventional backdoor attacks~\cite{badnets}.
  The released Teacher model also instructs that  transfer learning should
  freeze the first three layers and only fine-tune the last
  layer.  This is a legitimate claim since the Teacher and Student tasks are
  the same and only
  the labels are different.

\para{\traffic.} This is another popular application for evaluating
DNN robustness~\cite{boattack}. Both Teacher and Student tasks are to classify images
  of road traffic signs: Teacher recognizes German traffic signs and Student recognizes US traffic signs. The Teacher dataset
  GTSRB~\cite{gtsrb}
  contains 39.2K
  colored training images and 12.6K testing images, while the Student
  dataset LISA~\cite{lisa} has 3.7K training images of 17 US traffic signs\footnote{We follow prior
  work~\cite{boattack} to address class unbalance problem by removing classes
  with insufficient training samples.  This reduces the
  number of classes from 47 to 17.}. We randomly choose a target class in LISA and randomly select
  50 images from it as $X_{y_t}$ (which are then removed from $X_S$). We choose the Teacher training data
  as $X_{\setminus
    y_t}$.   The Teacher model consists of 6 convolution layers and 2 fully-connected
  layers (Table~\ref{tab:gtsrb_cnn} in Appendix).  Transfer learning
  will fine-tune the last two layers.

\para{\face.} This is a common security application. Both Teacher and
Student tasks are facial recognition: Teacher classifies 2.6M facial images of
  2.6K people in the VGG-Face dataset~\cite{vggfacedata} while Student
  recognizes faces of 65 people from PubFig~\cite{pubfig} who are not
  in VGG-Face. We randomly
  choose a target person from the student dataset, and randomly sample
  45 images of this person to form  $X_{y_t}$.  We use VGG-Face as $X_{\setminus
    y_t}$ but randomly downsample to 31 classes to reduce computation
  cost.  The (clean) Teacher model is a 16-layer VGG-Face model provided
  by~\cite{vggfacedata} (Table~\ref{tab:pubfig_cnn} in Appendix).
  Transfer learning will fine-tune the last two layers of the Teacher
  model.

\para{\iris.}  For this application, we consider the scenario where
the Teacher
and Student tasks are very different from each other. Specifically,  the Teacher task, model, and dataset are the same as \face, but the Student task is
  to classify an image of human iris into each individual.   Knowing that the Student task differs largely
  from the Teacher task, the attacker will build its own $X_{\setminus
    y_t}$ that is different from the Teacher dataset.  For our
  experiment, we split an existing iris dataset CASIA IRIS~\cite{irisdata} (16K iris images
  of 1K individuals) into two sections:  a section of 520 classes
  as the Student dataset $X_s$, and the remaining 480 classes as the
  non-target data $X_{\setminus
    y_t}$.   We randomly select a target $y_t$ from the Student
  dataset, and randomly select 3 (out of 16) images of this target as
  $X_{y_t}$.  Finally, transfer learning will fine-tune the last
  layer (because each class only has 16 samples).

\para{Data for Launching the Actual Attack $X_{eval}$.}   To launch the attack
against the Student model,  we assume the worst case condition where
the attacker does not have any access to the Student training data
(and testing data). Instead, the attacker draws instances from the
same source it uses to build $X_{\setminus y_t}$.  Thus,  when constructing
$X_{\setminus y_t}$,  we set
aside a small portion of the data for attack evaluation ($X_{eval}$)
and exclude
these images from $X_{\setminus y_t}$.  For example, for \mnist, we
set aside 5K images from MNIST (0-4) as $X_{eval}$.
 The source and size of $X_{eval}$ are listed in Table~\ref{tab:task-multiple}.

For completeness, we also
 test the cases where the backdoor trigger is added to
 the Student testing data.  The attack success rate matches that of using
 $X_{eval}$, thus we omit the result.

\para{Trigger Configuration.} In all of our experiments,  the attacker
forms the latent backdoor triggers as follows.
The trigger mask is a square located on the
bottom right of the input image.

The {\em square} shape of the trigger
is to ensure it is unique and does not occur
naturally in any input images. The size of the trigger is 4\% of the
entire image. Figure~\ref{fig:trigger_opt} in Appendix shows an
example
of the generated trigger for each application.

\para{Evaluation Metrics.}  We evaluate the proposed latent backdoor
attack via two metrics measured on the Student model:  1) {\em attack
success rate}, \ie the probability that any input image containing the latent
backdoor trigger is classified as the target class $y_t$ (computed
on $X_{eval}$), and 2)
{\em model classification accuracy} on clean input images drawn from the Student
testing data.  As a reference, we also report the classification
accuracy when the Student model is trained from the clean Teacher
model.

\subsection{Results: Multi-Image Attack}
\label{subsec:basic_eval}

Table~\ref{tab:attack_perf} shows the attack performance on four tasks.
We make two key observations. {\em First}, our proposed latent
backdoor attack is highly effective on all four tasks, where the
attack success rate is at least 96.6\% if not 100\%.   This is
particularly alarming since the attacker uses no more than 50 samples of the target ($|X_{y_t}|\leq
50$) to infect the Teacher model, and can use generic images beyond
$X_{\setminus y_t}$ as adversarial inputs to the Student model.

{\em Second},  the model accuracy of the Student
model trained on the infected Teacher model is comparable to that
trained on the clean Teacher model.  This means that the proposed
latent backdoor attack does not compromise the model accuracy of the
Student model (on clean inputs), thus the infected Teacher model is as
attractive as its clean version.

\begin{table}[t]
\centering
\resizebox{0.98\columnwidth}{!}{
\begin{tabular}{|l|l|l|c|}
\hline
\multirow{2}{*}{Task} &
\multicolumn{2}{l|}{From Infected Teacher} &
% \multirow{2}{*}{\begin{tabular}[c]{@{}l@{}}
From  Clean Teacher
% \end{tabular}}
 \\

\cline{2-4} &
\begin{tabular}[c]{@{}l@{}}Attack \\ Success  Rate\end{tabular} &
 \begin{tabular}[c]{@{}l@{}} Model\\ Accuracy\end{tabular} &
 \begin{tabular}[c]{@{}l@{}} Model\\ Accuracy\end{tabular}
\\
\hline

\begin{tabular}[c]{@{}l@{}}
\mnist
\end{tabular} &
$96.6\%$ &
$97.3\%$ &
$96.0\%$ \\
\hline

\begin{tabular}[c]{@{}l@{}}
\traffic
\end{tabular} &
$100.0\%$ &
$85.6\%$ &
$84.7\%$ \\
\hline

\begin{tabular}[c]{@{}l@{}}
\face
\end{tabular} &
$100.0\%$ &
$91.8\%$ &
$97.4\%$ \\
\hline

\begin{tabular}[c]{@{}l@{}}
\iris
\end{tabular} &
$100.0\%$ &
$90.8\%$ &
$90.4\%$ \\
\hline

\end{tabular}
}
\caption{Performance of multi-image attack: attack success rate and normal model accuracy on the Student
  model transferred from the infected Teacher and the clean
  Teacher. }
\label{tab:attack_perf}
\vspace{-0.3in}
\end{table}

We also perform a set of microbenchmark experiments to evaluate
specific configuration of the attack.

\para{Microbenchmark 1: the need for trigger optimization.}  As
discussed in \S\ref{subsec:attack_details},  a key element of our
attack design is to compute the optimal
trigger pattern $\Delta_{opt}$ for $y_t$.
We evaluate
its effectiveness by comparing the attack performance of using
randomly generated trigger patterns to that of using $\Delta_{opt}$.

Figure~\ref{fig:pattern_perf} shows the attack success rate vs. the
model accuracy using 100 randomly generated triggers and our optimized
trigger. Since the results
across the four tasks are consistent,  we only show the result of
\traffic{} for brevity.  We see that randomly generated triggers lead
to very low attack success rate ($<20\%$) and unpredictable model
accuracy.  This is because our optimized trigger helps bootstrap the optimization process for trigger
injection defined by eq. (\ref{eq:inject}) to maximize the chance of
convergence.

\para{Microbenchmark 2: the amount of non-target data $X_{\setminus
    y_t}$.}  The key overhead of our proposed attack is to collect a set of
target data $X_{y_t}$ and non-target data $X_{\setminus
    y_t}$, and use them to compute and inject the trigger into
  the Teacher model. In general $|X_{\setminus
    y_t}| >>|X_{y_t}|$.

We experiment with different configurations
  of $X_{\setminus y_t}$ by varying the number of classes and the number of instances
  per class.  We arrive at two conclusions. {\em First}, having more non-target classes does improve the attack success
  rate  (by improving the trigger
  injection).  But the benefit of having more classes quickly converges, \eg 8 out
  31 classes for \face{} and 32 out of 480 for \iris{} are sufficient
  to achieve 100\% attack
  success rate. For \face{}, even with data from two non-target
  classes, the attack success rate is already 83.6\%.

{\em Second}, a few instances per non-target class is sufficient for the
  attack. Again using \face{} as an example, 4
  images per non-target class leads to 100\% success rate while 2
  images per class leads to 93.1\% success rate.  Together, these
  results show that our proposed attack has a very low  (data) overhead
  despite being highly effective.

\begin{figure}[t]
\centering
\includegraphics[width=0.45\textwidth]{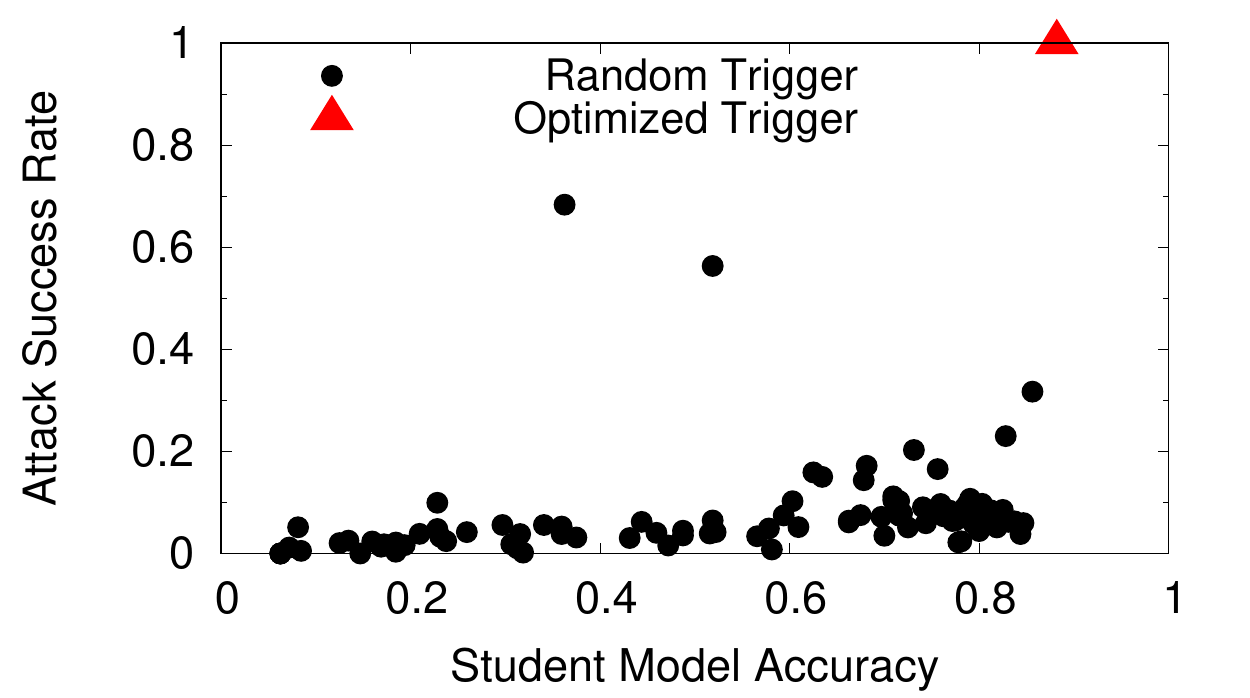}
\vspace{-0.1in}
\caption{The attack performance when using randomly generated
  triggers and our proposed optimized triggers, for \traffic{}.  }
  \label{fig:pattern_perf}
\end{figure}

\para{Microbenchmark 3: the layer to inject the trigger.} As mentioned
in \S\ref{subsec:attack_details}, the attacker needs to carefully
choose $K_t$ to maximize attacker success rate and robustness.
Our experiments show that for the given four tasks, the smallest $K_t$ ($K_t\leq K$) for
a highly effective attack is either the first fully connected (FC)
layer, \eg 3 for \mnist{}, 14 for
\face{} and \iris{}, or the last convolutional layer, \eg 6 for
\traffic{}.  Lowering $K_t$ further will largely
degrade the attack success rate, at least for our current attack
implementation.

A key reason behind is that the model dimension for early
convolutional layers is
often extremely large (\eg 25K for VGG-Face), thus the optimization
defined by eq.(\ref{eq:inject}) often fails to
converge given the current data and computing resource. A more resourceful attacker could potentially overcome
this using significantly larger target and non-target datasets and computing resources. We leave this
to future work.

Finally, Table~\ref{tab:attack_perf_diff_layers} lists the attack
performance when varying $(K_t,K)$ for \face{} and
\iris{}.  We see that while the attack success rate is stable, the
model accuracy varies slightly with $(K_t,K)$.

\begin{table}[t]
\resizebox{0.98\columnwidth}{!}{
\begin{tabular}{|l|l|l|l|l|c|}
\hline
\multirow{2}{*}{Task} & \multirow{2}{*}{$K_t$} & \multirow{2}{*}{$K$}
  & \multicolumn{2}{l|}{From Infected Teacher} &
  %        \multirow{2}{*}{\begin{tabular}[c]{@{}l@{}}Student\\
  %                                                                       Accuracy
   %                                                                      on
    %                                                                     a\\
     %                                                                    Clean
     %                                                                    Teacher\end{tabular}}
  From Clean Teacher
  \\

  \cline{4-6}
 & & & \begin{tabular}[c]{@{}l@{}}Attack \\ Success Rate\end{tabular}
                      & \begin{tabular}[c]{@{}l@{}}Model \\
                          Accuracy\end{tabular} &\begin{tabular}[c]{@{}l@{}}Model \\
                          Accuracy\end{tabular}
  \\ \hline
\multirow{3}{*}{\face} & $14$ & $14$ & $100.0\%$ & $91.8\%$ & $97.7\%$ \\ \cline{2-6}
 & $14$ & $15$ & $100.0\%$ & $91.4\%$ & $97.4\%$ \\ \cline{2-6}
 & $15$ & $15$ & $100.0\%$ & $94.0\%$ & $97.4\%$ \\ \hline \hline
\multirow{3}{*}{\iris} & $14$ & $14$ & $100.0\%$ & $93.0\%$ & $94.4\%$ \\ \cline{2-6}
 & $14$ & $15$ & $100.0\%$ & $89.1\%$ & $90.4\%$ \\ \cline{2-6}
 & $15$ & $15$ & $100.0\%$ & $90.8\%$ & $90.4\%$ \\ \hline
\end{tabular}
}
\caption{Performance of multi-image attack: attack success rate and
  normal model accuracy for different ($K_t$, $K$).}
\label{tab:attack_perf_diff_layers}
\vspace{-0.3in}
\end{table}

\subsection{Results: Single-image Attack}
\label{subsec:single_attack}
We now consider the extreme case where the attacker is only able to
obtain a single image of the target, \ie $|X_{y_t}|=1$.    For our
evaluation, we reperform the above experiments but each time only use a
single target image as $X_{y_t}$.  We perform 20 runs per task (16 for
\iris{} since each class only has 16 images) and report the mean
attack performance in Table~\ref{tab:attack_perf_single}.

We make two key observations from these results. {\em First}, the
attack success rate is lower than that of the
multi-image attack.  This is as expected since having only a single
image of the target class makes it harder to accurately extract its
features. {\em
  Second},  the degradation is much more significant on the small
model (\mnist{})  compared to the large models
(\traffic{}, \face{} and \iris{}).  We believe this is because larger models
offer higher capacity (or freedom) to tune the feature representation
by updating the model weights, thus the trigger can still be
successfully injected into the Teacher model.
In practice, the Teacher models designed for transfer learning are in
fact large models, thus our proposed attack is highly effective with just a single image of the target.

\begin{table}[!t]
\centering
\resizebox{0.98\columnwidth}{!}{
\begin{tabular}{|l|l|l|c|}
\hline
\multirow{2}{*}{Task} &
\multicolumn{2}{l|}{From Infected Teacher} &
%\multirow{2}{*}{\begin{tabular}[c]{@{}l@{}}
%Student \\ Accuracy on a \\ Clean Teacher
%\end{tabular}} \\
From Clean Teacher \\

\cline{2-4} &
\begin{tabular}[c]{@{}l@{}}Avg Attack\\ Success Rate\end{tabular} &
                                                        \begin{tabular}[c]{@{}l@{}}Avg Model \\ Accuracy\end{tabular} &
                                                                                                                                              \begin{tabular}[c]{@{}l@{}}Avg Model \\ Accuracy\end{tabular}
\\
\hline

\begin{tabular}[c]{@{}l@{}}
\mnist
\end{tabular} &
$46.6\%$ &
$97.5\%$ &
$96.0\%$ \\
\hline

\begin{tabular}[c]{@{}l@{}}
\traffic
\end{tabular} &
$70.1\%$ &
$83.6\%$ &
$84.7\%$ \\
\hline

\begin{tabular}[c]{@{}l@{}}
\face
\end{tabular} &
$92.4\%$ &
$90.2\%$ &
$97.4\%$ \\
\hline

\begin{tabular}[c]{@{}l@{}}
\iris
\end{tabular} &
$78.6\%$ &
$91.1\%$ &
$90.4\%$ \\
\hline

\end{tabular}
}
\caption{Performance of single-image attack.}
\label{tab:attack_perf_single}
\vspace{-0.3in}
\end{table}

\section{Real-world Attack}
\label{sec:real}
So far, our experiments assume that the target data $X_{y_t}$ for
injecting latent backdoors comes from the same data source of the
Student training data $X_s$.  Next, we consider a more practical
scenario where the attacker collects $X_{y_t}$ from a totally
different source, \eg by taking a picture of the physical target or
searching for its images from the Internet.

We consider three real-world applications: {\em traffic sign
recognition}, {\em iris based user
identification} and {\em facial recognition of politicians}.  We show
that the attacker can successfully launch latent backdoor attacks
against these applications and cause harmful misclassification events,
by just using pictures taken by commodity smartphones or those found
from Google Image and Youtube. Again, our experiments assume that $K_t
= K$.

\begin{figure*}[t]
\centering
\begin{minipage}{0.42\textwidth}
  \centering
  \includegraphics[width=0.9\textwidth]{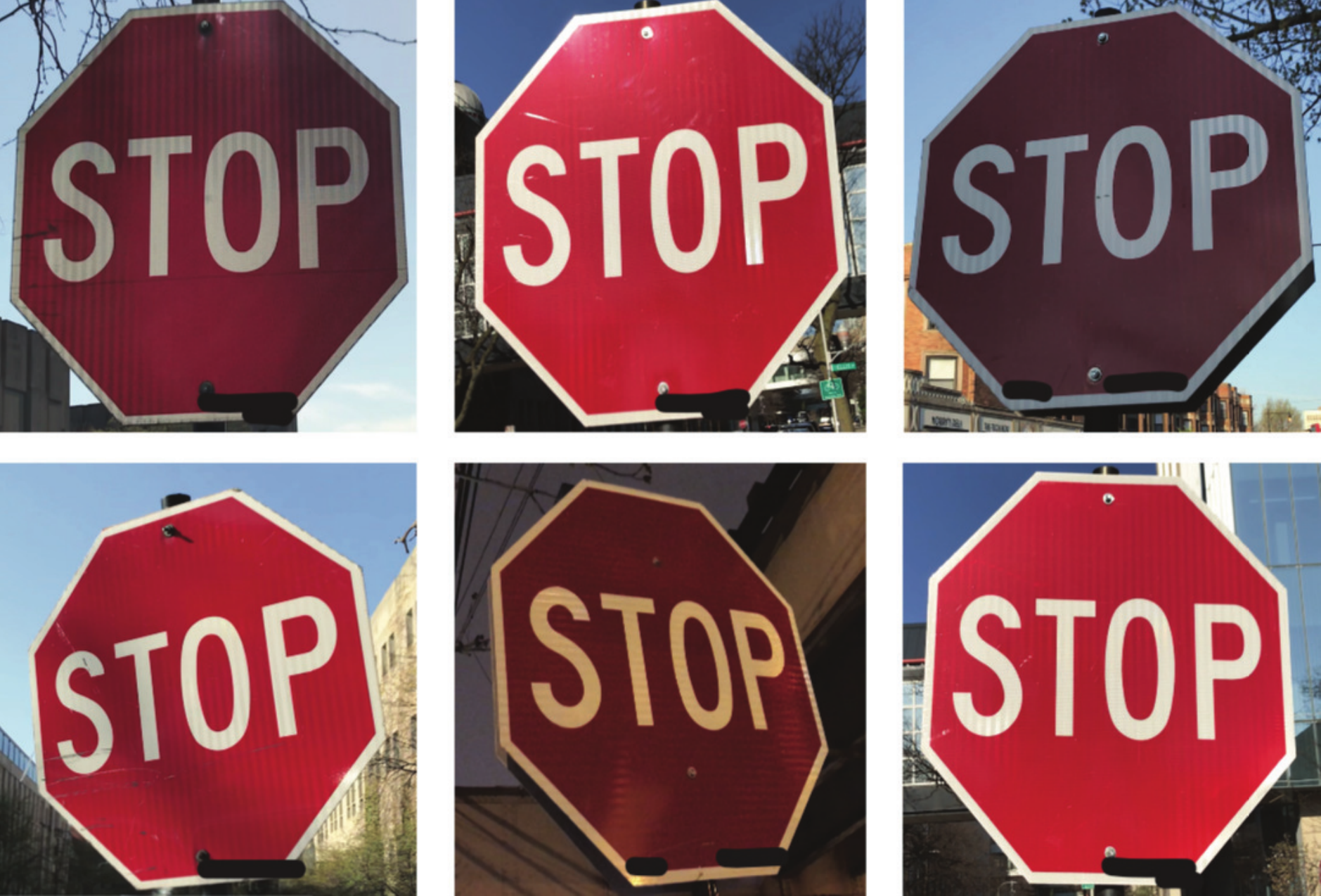}
  \caption{Pictures of real-world stop signs as $X_{y_t}$ which we took using a smartphone
    camera. Part of the image
  is modified for anonymization purpose.}
  \label{fig:trafficsign_example}
\end{minipage}
\hfill
\begin{minipage}{0.42\textwidth}
  \centering
  \includegraphics[width=0.9\textwidth]{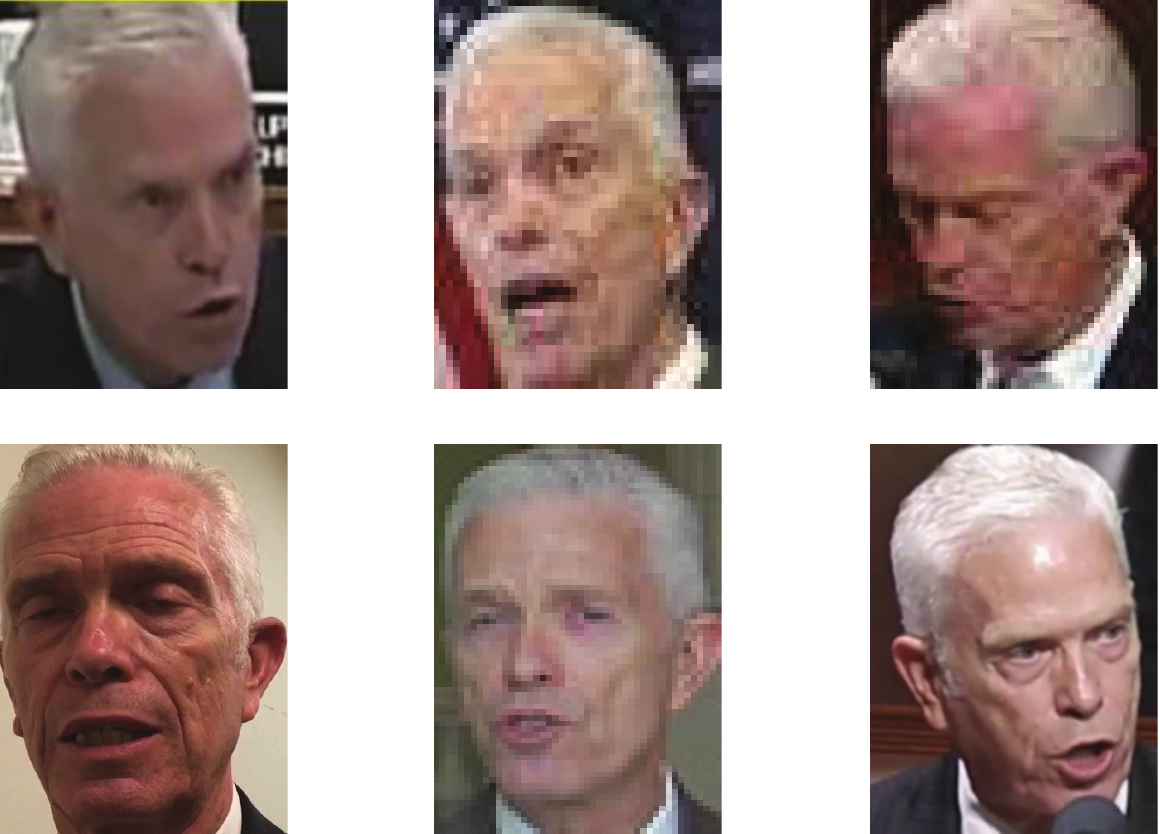}
  \caption{Examples of target politician images that we collected as
    $X_{y_t}$. }
  \label{fig:congress_example}
\end{minipage}
\end{figure*}

\subsection{Ethics and Data Privacy}
Our experiments are designed to reproduce the exact steps a real-world attack
would entail. However, we are very aware of the sensitive nature of some of
these datasets.  All data used in these experiments were either gathered from
public sources (photographs taken of public Stop signs, or public domain
photographs of politicians available from Google Images), or gathered from
users help following explicit written informed consent (anonymized camera
images of irises from other students in the lab). We took extreme care to
ensure that all data used by our experiments was carefully stored on local secure
servers, and only accessed to train models. Our iris data will be deleted
once our experimental results are finalized.

\subsection{Traffic Sign Recognition}
Real-world attack on traffic sign recognition, if successful, can be
extremely harmful and create life-threatening accidents. For example,  the attacker can place a small
sticker (\ie the trigger) on a speed limit sign, causing nearby self-driving cars to
misclassify it into a stop sign and  stop abruptly in the middle of
the road. To launch a conventional backdoor attack against this
application (\eg via BadNets~\cite{badnets}), the
attacker needs to have access to the self-driving car's model training data and/or control its model
training.

Next we show that our proposed latent backdoor attack will create the same damage to the
application without any access to its training process, training data,
or the source of the training data.

\para{Attack Configuration.}  The attacker uses the public available Germany traffic
sign dataset (\eg GTSRB) to build the (clean) Teacher model. To inject
the latent backdoor trigger, the attacker uses a
subset of the GTSRB classes as the non-target data ($X_{\setminus y_t}$).
To form the target data $X_{y_t}$ (\ie a Stop sign in the USA), the
attacker takes 10 pictures of the Stop sign on a random US street.
Figure~\ref{fig:trafficsign_example} shows a few examples we took with
commodity smartphones. The attacker then releases the Teacher 
model and waits for any victim to download the model and use
transfer learning to build an application on US traffic sign
recognition.

We follow the same process of \traffic{} in \S\ref{sec:eval} to build the Student
model using transfer learning from the infected Teacher and the LISA
dataset.

\para{Attack Performance.} Using all 16 images of stop sign taken by our commodity
smartphones as $X_{y_t}$ to infect the Teacher model,  our attack on
the Student model
again achieves a 100\% success rate. Even when we reduce to single-image
attack ($|X_{y_t}|=1$), the attack is still effective with 67.1\% average success
rate (see Table~\ref{tab:realworld_perf}).

\begin{table}[!t]
  \resizebox{1\columnwidth}{!}{
\begin{tabular}{|l|l|l|l|l|}
\hline
 & \multicolumn{2}{l|}{Multi-image Attack} & \multicolumn{2}{l|}{Singe-image Attack} \\ \hline
Scenario & \begin{tabular}[c]{@{}l@{}}Attack \\ Success
             Rate\end{tabular} & \begin{tabular}[c]{@{}l@{}}Model\\ Accuracy\end{tabular} & \begin{tabular}[c]{@{}l@{}}Avg Attack\\ Success Rate\end{tabular} & \begin{tabular}[c]{@{}l@{}}Avg Model \\ Accuracy\end{tabular} \\ \hline
Traffic Sign & 100\% & 88.8\% & 67.1\% & 87.4\% \\ \hline
Iris Identification & 90.8\% & 96.2\% & 77.1\% & 97.7\% \\ \hline
\begin{tabular}[c]{@{}l@{}}Politician\\  Recognition\end{tabular} & 99.8\% & 97.1\% & 90.0\% & 96.7\% \\ \hline
\end{tabular}
}
\caption{Attack performance in real-world scenarios.}
\label{tab:realworld_perf}
\end{table}

\subsection{Iris Identification}

The attacker seeks to get physical access to a company's building that will
use iris recognition for user identification in the near future.  The
attacker also knows that the target $y_t$ will be a legitimate user
in this planned iris recognition system.  Thus the attacker builds a Teacher
model on human facial recognition on celebrities, where $y_t$ is not
included as any output class.  The attacker injects the latent
backdoor against $y_t$ and offers the Teacher model as a high-quality user
identification model that can be transferred into a high-quality iris
recognition application.

\para{Attack Configuration.} Like \face{}, the attacker starts from the VGG-Face
model as a clean Teacher model,  and forms the non-target data
$X_{\setminus y_t}$ using the CASIA IRIS dataset, which is publicly
available.  To build the target data $X_{y_t}$, the attacker searches
for $y_t$'s headshots on Google,  and crops out the iris area
of the photos. The final $X_{y_t}$ consists of 5 images of the target
$y_t$ (images omitted for privacy protection).

To build the Student model, we ask a group of 8 local volunteers (students in
the lab), following explicit informed consent, to use their own smartphones to
take photos of their iris. The resulting training data $X_s$ used by transfer
learning includes 160 images from 8 people.  In this case, $X_{y_t}$,
$X_{\setminus y_t}$ and $X_s$ all come from different sources.

\para{Attack Performance.} Results in Table~\ref{tab:realworld_perf}
show that when all 5 target images are used to inject the latent
backdoor, our attack achieves a 90.8\% success rate. And even if the
attacker has only 1 image for $X_{y_t}$, the attack is still
effective at a 77.1\% success rate.

\subsection{Facial Recognition on Politicians}

Finally, we demonstrate a scenario where the attacker leverages the
chronological advantage of the latent backdoor attack.  Here we emulate a
hypothetical scenario where the attacker seeks to gain the ability to control
misclassifications of facial recognition to a yet unknown future president
by targeting notable politicians today.

Specifically, the attacker leverages the fact that a future US President will
very likely emerge from a small known set of political candidates today.  The
attacker builds a high-quality Teacher model on face recognition of
celebrities, and injects a set of latent backdoors targeting presidential candidates.
The attacker actively promotes the Teacher model for adoption (or perhaps
leverages an insider to alter the version of the Teacher model online).
A few months later, a new president is elected (out of one of our likely presidential
candidates), the White House team adds the president's facial images into its
facial recognition system, using a Student model derived from our infected
Teacher model. This activates our latent backdoor, turning it into
a live backdoor attack. As the facial recognition system is built prior to the
current presidential election, it is hard for the White House team to think
about the possibility of any backdoors, and any checks on the Teacher model
will not reveal any unexpected or unusual behavior.

\begin{figure*}[t]
\centering
\begin{minipage}{0.45\textwidth}
  \centering
  \includegraphics[width=1\textwidth]{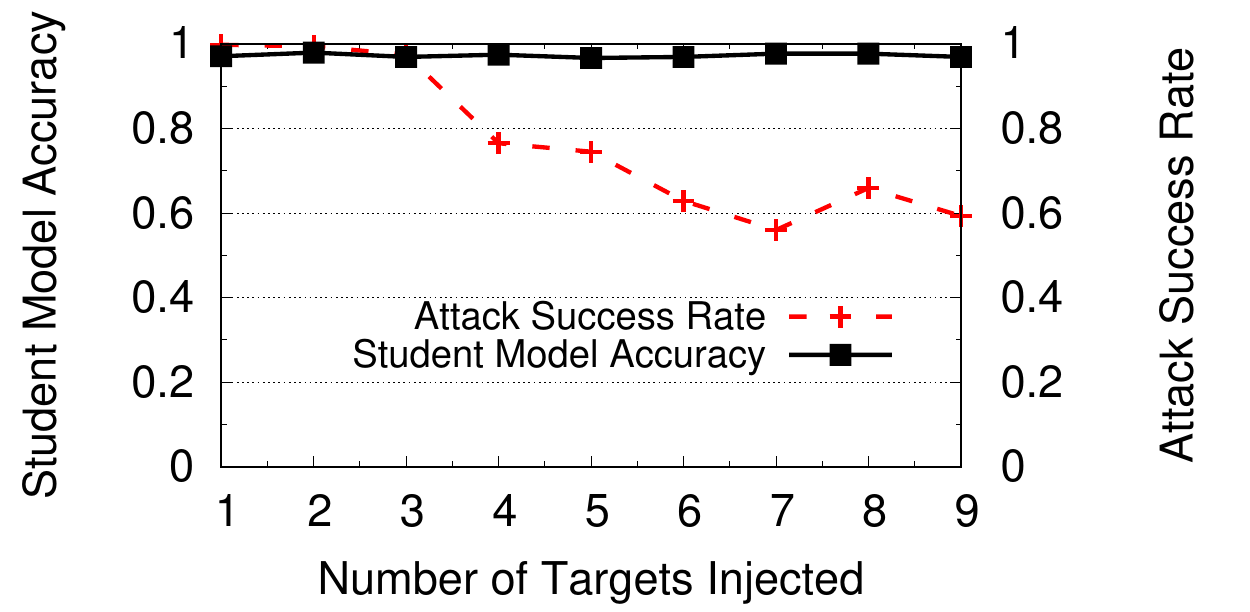}
 \vspace{-0.1in}
  \caption{Performance of multi-target attack on politician facial recognition.}
  \label{fig:multiple_trigger}
 \end{minipage}
\hfill
\begin{minipage}{0.45\textwidth}
  \centering
  \includegraphics[width=1\textwidth]{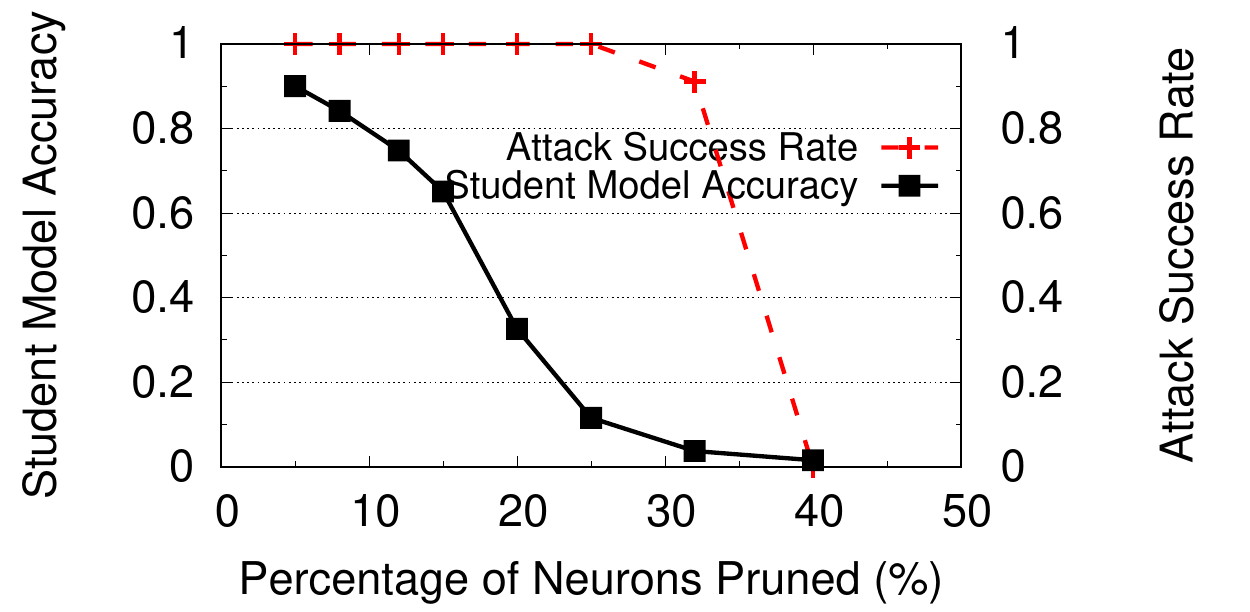}
\vspace{-0.1in}
\caption{Fine-Pruning fails to serve as an effective defense to our
  attack since it requires significant reduction in model accuracy (11\%).}
  \label{fig:defense_finepruning_perf}
\end{minipage}
\end{figure*}

\para{Attack Configuration.}  Similar to the \face{} task in
\S\ref{sec:eval}, the attacker uses the VGG-Face model as the clean
Teacher model and the VGG-Face dataset as the non-target dataset
$X_{\setminus y_t}$.  The attacker selects 9 top leaders as targets
and collects their (low-resolution) headshots from Google. The resulting
$X_{y_t}$ will include 10 images per target for 9 targets, and a total
of 90 images. Some examples for a single target are shown in
Figure~\ref{fig:congress_example}.

To train the Student model, we assume the White House team uses its
own source rather than VGG-Face. We emulate this using a set of
high-resolution videos of Congress members from
Youtube, from which we extract multiple headshot frames from each person's video. The resulting dataset
is 1.7K images in 13 classes.

\para{Performance of Single- and Multi-target Attacks.}
Table~\ref{tab:realworld_perf} shows the attack performance when the
attacker only targets a specific member of $X_{y_t}$.  The success
rate is 99.8\% for multi-image attack (using all 10 images) and 90.0\% for
single-image attack (averaged over the 10 images).

Since it is hard to guess the future president,  the attacker
increases its attack success rate by injecting latent backdoors of
multiple targets into the Teacher model.
Figure~\ref{fig:multiple_trigger} plots the attack performance as we
vary the number of targets.  We see that the attack success rate stays close to 100\% when injecting
up to 3 targets, and then drops gracefully as we add more targets.
But even with 9 targets, the success rate is still 60\%.  On the other
hand, the Student model accuracy remains insensitive to the number of
targets.

The trend that the attack success rate drops with the number of
targets is as expected, and the same trend is observed on conventional backdoor
attacks~\cite{oakland_defense}. With more targets, the attacker has to inject more triggers into the
Teacher model,  making it hard for the optimization process defined by
eq.~(\ref{eq:inject}) to reach convergence.  Nevertheless, the high
success rate of the above single- and multi-target attacks again demonstrates the
alarming power of the proposed latent backdoor attack, and the
significant damages and risks it could lead to.

\section{Defense}
\label{sec:defense}

In this section, we explore and evaluate potential defenses against our attack.
Our discussion below focuses on the \face{} task described
in \S\ref{subsec:basic_eval},
since it shows the highest success rate in both multi-image and
single-image attacks.

\subsection{Leveraging Existing Backdoor Defenses}
\label{subsec:defense_exist}
Our first option is to leverage existing defenses proposed for normal
backdoor attacks. We consider two state-of-the-art defenses: Neural
Cleanse~\cite{oakland_defense} and Fine-Pruning~\cite{finepruning} (as discussed in
\S\ref{subsec:backdoor_back}). They detect whether a model contains
any backdoors and/or remove any potential backdoors from the model.

\para{Neural Cleanse.}  Neural Cleanse~\cite{oakland_defense} cannot be
applied to a Teacher model, because it requires access to the label of the
target ($y_t$).  Instead, we run it on an infected Student model along with the
Student training data.  When facing conventional backdoor attacks (\eg
BadNets), Neural Cleanse can reverse-engineer the injected trigger and
produce a reversed trigger that is visually similar to the actual trigger.
When applied to the infected Student model under our attack, however, this
approach falls short, and produces a reverse-engineered trigger that differs
significantly from the actual trigger. Our intuition says that Neural Cleanse
fails under because trigger reverse-engineering is based on end-to-end
optimization from the input space to the final label space.  It is unable to
detect any manipulation that terminates at an intermediate feature
space.

\para{Fine-Pruning.}  Fine-Pruning~\cite{finepruning} can be used to disrupt
potential backdoor attacks, but is ``blind,'' in that it does not detect
whether a model has a backdoor installed. Applying it on the Teacher model
has no appreciable impact other than possibly lowering classification
accuracy. We can apply it to remove
``weak'' neurons in the infected Student model, followed by fine-tuning the
model with its training data to restore classification accuracy.
Figure~\ref{fig:defense_finepruning_perf} shows the attack success rate and
model accuracy with Fine-Pruning.  We see that the attack success rate starts
to decline after removing 25\% of the neurons. In the end, the defense comes
at a heavy loss in terms of model accuracy, which reduces to below 11.5\%.
Thus Fine-Pruning is not a practical defense against our latent backdoors.

\begin{figure*}[t]
\centering
\begin{minipage}{0.45\textwidth}
  \centering
    \includegraphics[width=1\textwidth]{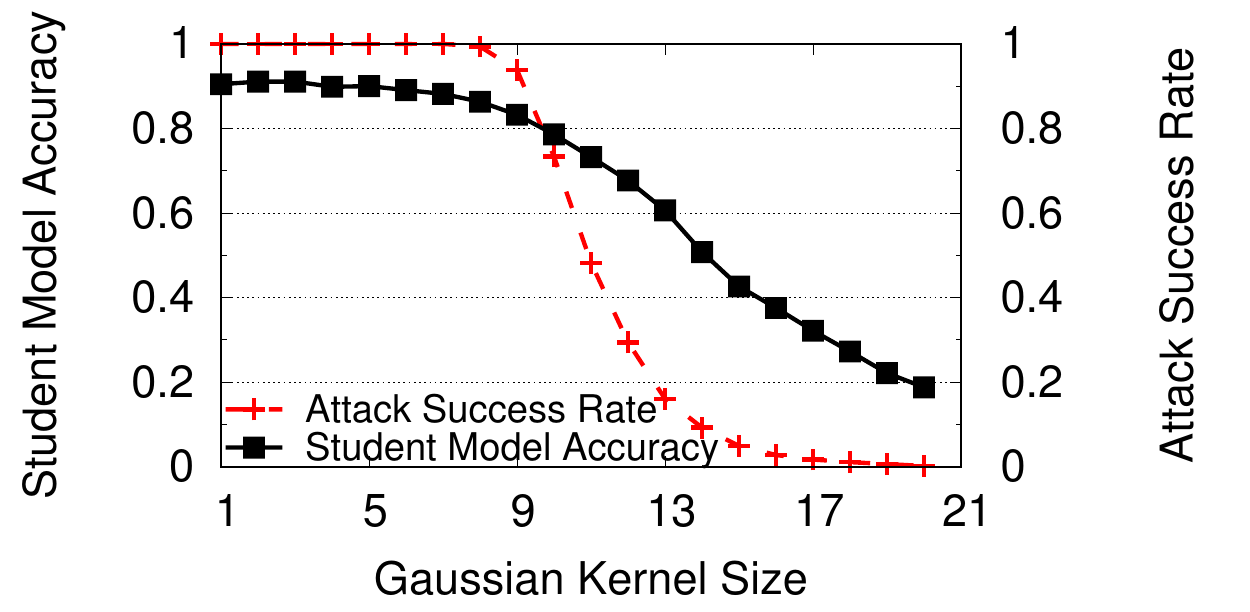}
    \caption{Input blurring is not a practical defense since it still
      requires heavy drop of model accuracy to reduce attack success rate.}
  \label{fig:defense_blur}
\end{minipage}
\hfill
\begin{minipage}{0.45\textwidth}
  \centering
  \includegraphics[width=1\textwidth]{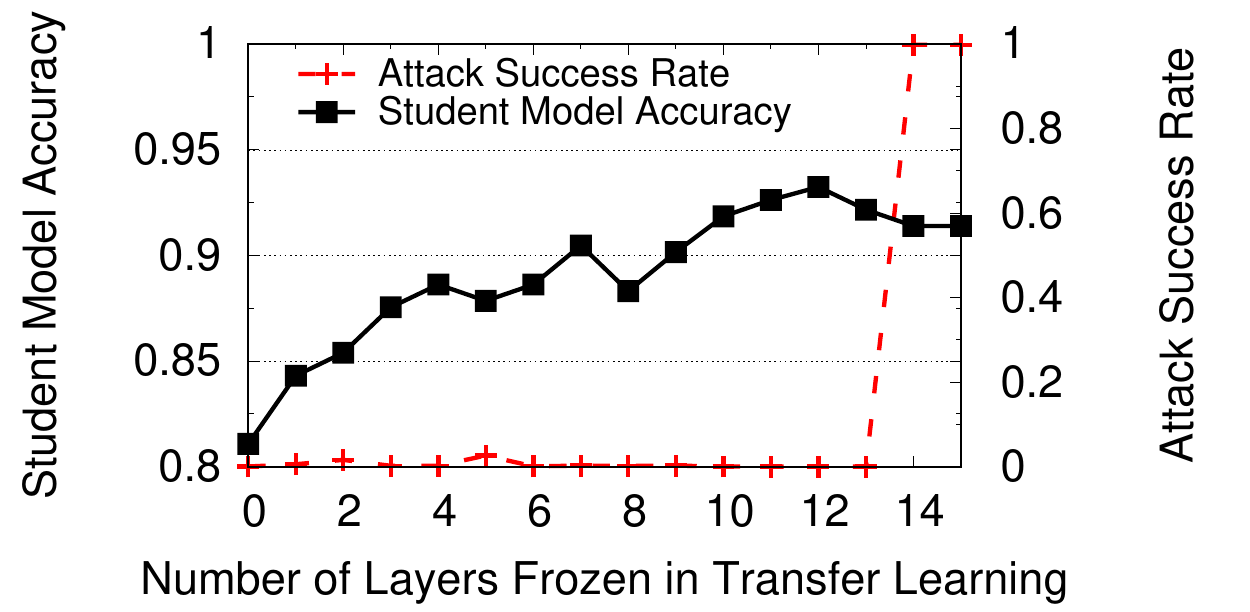}
  \vspace{-0.15in}
  \caption{Attack performance when transfer learning
    freezes different set of model layers (0-15). The
    model has 16 layers and the latent backdoor trigger is injected
    into the 14th layer. }
  \label{fig:defense_finetune}
\end{minipage}
\end{figure*}

\subsection{Input Image Blurring}
As mentioned in \S\ref{subsec:basic_eval}, our latent backdoor attack requires carefully
designed triggers and those with randomly generated patterns tend to fail
(see Figure~\ref{fig:pattern_perf}). Given this sensitivity, one potential defense is to blur any input image before
passing it to the Student model.  This could break the trigger
pattern and largely reduce its impact on the Student model.

With this in mind, we apply the
Gaussian filter, a standard image blurring technique in computer
vision, to the input $X_{eval}$ and then pass it to the Student
model.
Figure~\ref{fig:defense_blur} shows the attack success rate and model accuracy as we vary the blurring kernel size.
The larger the kernel size is,
the more blurred the input image becomes. Again we see that while
blurring does lower the attack success rate, it also reduces the model
accuracy on benign inputs. Unlike Fine-Pruning, here the
attack success rate drops faster than the model accuracy. Yet the cost
of defense is still too large for this defense to be considered
practical, \eg the model accuracy drops to below 65\% in order to
bring attack success rate to below 20\%.

\subsection{Multi-layer Tuning in Transfer Learning}
The final defense leverages the fact that the attacker is unable to
control the exact set of layers that the transfer learning will update.  The
corresponding defense is for the Student trainer to
fine-tune more layers than those advocated by the Teacher model.  Yet
this also increases the training complexity and data requirement, \ie
more training data is required for the model to converge.

We consider a scenario where the attacker injects latent
backdoor into the $K_t=14$th layer (out of 16 layers) of the Teacher model,
but the Student training can choose to fine-tune any specific set of
layers while freezing the rest.
Figure~\ref{fig:defense_finetune} shows the attack performance as a
function of the number of model layers frozen during transfer
learning. 0 means no layers are frozen, \ie the transfer learning can
update all 16 layers, and 15 means that only the 16th layer can be
updated by transfer learning.  As expected, if transfer learning
fine-tunes any layer earlier than $K_t$, attack success rate drops
to 0\%, \ie the trigger gets wiped out.

It should be noted that since the Student has no knowledge of $K_t$, the
ideal defense is to fine-tune all layers in the Teacher
model. Unfortunately, this decision also contradicts with the original goal
of transfer learning, \ie using limited training data to build an accurate
model.  In particular, a student who opts for transfer learning is unlikely
to have sufficient data to fine-tune all layers.  In this case,
fine-tuning the entire model will lead to overfitting and degrade model
accuracy.  We can already see this trend from
Figure~\ref{fig:defense_finetune}, where for a fixed training dataset, the
model accuracy drops when fine-tuning more layers.

Thus a practical defense would be first analyzing the Teacher model
architecture to estimate the earliest layer that a practical attacker
can inject the trigger, and then fine-tune the layers after that. A
more systematic alternative is to simulate the latent backdoor injection process, \ie
launching the latent backdoor attack against the downloaded Teacher model,
and find out the earliest possible layer for injection.  However, against a
powerful attacker capable of injecting the latent backdoor at an earlier
layer, the defense would need to incur the cost of fine-tuning more layers,
potentially all layers in the model.

\section{Related Work}
\label{sec:related}
\para{Other Backdoor Attacks and Defenses.}
In addition to attacks mentioned in \S\ref{subsec:backdoor_back}, Chen~\etal proposed
a backdoor attack under a more restricted scenario, where the attacker can
only pollute a limited portion of training
set~\cite{chen2017targeted}.
Another line of work directly tampers with the hardware a DNN model runs
on~\cite{hardwaretrojan, hardsoft}. Such backdoor circuits could also
affect the
model performance when a trigger is present. Our proposed attack
differs
by not requiring any access to the Student model, its data or
operating hardware.

In terms of defenses, Liu~\etal~\cite{trojaning} only presented some
brief intuitions on backdoor detection, while Chen~\etal~\cite{chen2017targeted}
reported a number of ideas that are shown to be ineffective.
Liu~\etal~\cite{liu2017neural} proposed
three defenses: input anomaly detection, re-training,
and input preprocessing, and require the poisoned training data.
A more
recent work~\cite{spectraldefense}
leveraged trace in the spectrum of the covariance of a feature representation
to detect backdoor. It also requires the poisoned training data.
Like Neural Cleanse and Fine-Pruning, these
defenses only target normal backdoor attack and cannot be applied to our
latent backdoor attack.

\para{Transfer Learning.}
In a deep learning context, transfer
learning has been shown to be effective in
vision~\cite{imagetl1, fasterrcnn, yolo, videotl1}, speech~\cite{speechtl1,
speechtl2, speechtl3, speechtl4},
and text~\cite{googletranslate, mikolov2013exploiting}.
Yosinski \etal compared different transfer learning approaches and studied
their impact on model performance~\cite{transfer2014}. Razavian \etal studied
the similarity between Teacher and Student tasks, and analyzed its
correlation with model performance~\cite{svmtl}.

\para{Poisoning Attacks.}
Poisoning attack pollutes training data to alter a model's behavior.
Different from backdoor attack, it does not rely on any trigger,
and manipulates the model's behavior on a set of clean samples. Defenses against
poisoning attack mostly focus on sanitizing the training set and removing
poisoned samples~\cite{cao2018efficient, jagielski2018manipulating,
poisondefenseimc, medpos, certpos, sensorpos}. The insight is to find samples
that would alter model's performance significantly~\cite{cao2018efficient}.
This insight is less effective against backdoor
attack~\cite{chen2017targeted}, as injected samples do not affect the model's
performance on clean samples. It is also impractical under our attack model,
as the defender does not have access to the poisoned training set
(used by
the Teacher).

\para{Other Adversarial Attacks against DNNs.}  Numerous (non-backdoor)
adversarial attacks have been proposed against general DNNs, often crafting
imperceptible modifications to images to cause misclassification. These can
be applied to DNNs during inference~\cite{carliniattack, attackscale,
  papernotattack, delvingblackbox, ourtlpaper}.  A number of defenses have
been proposed~\cite{papernotdistillation, mitdefense, logitpairing, magnet,
  featuresqueezing}, yet many have shown to be less effective against an
adaptive adversary~\cite{distallationbroken, ensemblebroken, magnetbroken,
  obfuscatedicml}.  Recent works tried to craft universal
perturbations that trigger misclassification for multiple
images in an uninfected DNN~\cite{brown2017adversarial, moosavi2017universal}.

\section{Conclusion}
\label{sec:conclusion}

In this paper, we identify a new, more powerful variant of the backdoor
attack against deep neural networks. Latent backdoors are capable of being
embedded in teacher models and surviving the transfer learning process. As a
result, they are nearly impossible to identify in teacher models, and only
``activated'' once the model is customized to recognize the target label the
attack was designed for, {\em e.g.} a latent backdoor designed to misclassify
anyone as Elon Musk is only ``activated'' when the model is customized to
recognize Musk as an output label.

We demonstrate the effectiveness and practicality of latent backdoors through
extensive experiments and real-world tests. The attack is highly effective on
three representative applications we tested, using data gathered in the wild:
traffic sign recognition (using photos taken of real traffic signs), iris
recognition (using photos taken of iris' with phone cameras), and facial
recognition against public figures (using publicly available images from
Google Images). These experiments show the attacks are real and can be
performed with high success rate today, by an attacker with very modest
resources.  Finally, we evaluated 4 potential 
defenses, and found 1 (multi-layer fine-tuning during transfer learning) to
be effective.

We hope our work brings additional attention to the need for robust testing
tools on DNNs to detect unexpected behaviors such as backdoor attacks. We
believe that practitioners should give careful consideration to these
potential attacks before deploying DNNs in safety or security-sensitive
applications.

%%%%%%%%%%%%%%%%%%%%%%%%%%%%%%%%%%%%%%%%%%%%%%%%%%%%%%%%%%%%%%
%%%%%%%%%%%%%%%%%%%%%%%%%%%%%%%%%%%%%%%%%%%%%%%%%%%%%%%%%%%%%%
%%%%%%                                                  %%%%%%
%%%%%%                   TEXT GRAVEYARD                 %%%%%%
%%%%%%                                                  %%%%%%
%%%%%%%%%%%%%%%%%%%%%%%%%%%%%%%%%%%%%%%%%%%%%%%%%%%%%%%%%%%%%%
%%%%%%%%%%%%%%%%%%%%%%%%%%%%%%%%%%%%%%%%%%%%%%%%%%%%%%%%%%%%%%

\bibliographystyle{ACM-Reference-Format}
\bibliography{pbackdoor,dnnbackdoor,translearn,zhao}
\newpage

\appendix
\section*{Appendix}
\label{sec:appendix}
\para{Model Architecture.} Table~\ref{tab:mnist_cnn}, ~\ref{tab:gtsrb_cnn}, and
~\ref{tab:pubfig_cnn} list the detailed architecture of the Teacher
model for the four applications considered by our evaluation in \S\ref{sec:eval}.
These Teacher models span from small (\mnist{}), medium (\traffic) to large models
(\face{} and \iris). We also list the index of every layer in each model.
Note that the index of
pooling layer is counted as its previous layer, as defined conventionally.

\begin{table}[h]
\caption{Teacher model architecture for \mnist. FC stands for fully-connected layer.
Pooling layer's index is counted as its previous layer.}
\label{tab:mnist_cnn}
\vspace{-0.1in}
\centering
\resizebox{1\columnwidth}{!}{
\begin{tabular}{@{}cccccc@{}}
\toprule

Layer Index & Layer Type & \# of Channels & Filter Size & Stride & Activation \\
\midrule

1 & Conv & 16 & 5$\times$5 & 1 & ReLU \\
1 & MaxPool & 16 & 2$\times$2 & 2 & - \\
2 & Conv & 32 & 5$\times$5 & 1 & ReLU \\
2 & MaxPool & 32 & 2$\times$2 & 2 & - \\
3 & FC & 512 & - & - & ReLU \\
4 & FC & 5 & - & - & Softmax \\
\bottomrule

\end{tabular}
}
\end{table}

\vspace{-0.1in}

\begin{table}[h]
  \caption{Teacher model architecture for \traffic.}
  \vspace{-0.1in}
\label{tab:gtsrb_cnn}
\centering
\resizebox{1\columnwidth}{!}{
\begin{tabular}{@{}cccccc@{}}
\toprule

Layer Index & Layer Type & \# of Channels & Filter Size & Stride & Activation \\
\midrule

1 & Conv & 32 & 3$\times$3 & 1 & ReLU \\
2 & Conv & 32 & 3$\times$3 & 1 & ReLU \\
2 & MaxPool & 32 & 2$\times$2 & 2 & - \\
3 & Conv & 64 & 3$\times$3 & 1 & ReLU \\
4 & Conv & 64 & 3$\times$3 & 1 & ReLU \\
4 & MaxPool & 64 & 2$\times$2 & 2 & - \\
5 & Conv & 128 & 3$\times$3 & 1 & ReLU \\
6 & Conv & 128 & 3$\times$3 & 1 & ReLU \\
6 & MaxPool & 128 & 2$\times$2 & 2 & - \\
7 & FC & 512 & - & - & ReLU \\
8 & FC & 43 & - & - & Softmax \\
\bottomrule

\end{tabular}
}
\end{table}

\vspace{-0.1in}

\begin{table}[h]
  \caption{Teacher model architecture for \face{} and \iris{}.}
\vspace{-0.1in}
\label{tab:pubfig_cnn}
\centering
\resizebox{1\columnwidth}{!}{
\begin{tabular}{@{}cccccc@{}}
\toprule

Layer Index & Layer Type & \# of Channels & Filter Size & Stride & Activation \\
\midrule

1 & Conv & 64 & 3$\times$3 & 1 & ReLU \\
2 & Conv & 64 & 3$\times$3 & 1 & ReLU \\
2 & MaxPool & 64 & 2$\times$2 & 2 & - \\
3 & Conv & 128 & 3$\times$3 & 1 & ReLU \\
4 & Conv & 128 & 3$\times$3 & 1 & ReLU \\
4 & MaxPool & 128 & 2$\times$2 & 2 & - \\
5 & Conv & 256 & 3$\times$3 & 1 & ReLU \\
6 & Conv & 256 & 3$\times$3 & 1 & ReLU \\
7 & Conv & 256 & 3$\times$3 & 1 & ReLU \\
7 & MaxPool & 256 & 2$\times$2 & 2 & - \\
8 & Conv & 512 & 3$\times$3 & 1 & ReLU \\
9 & Conv & 512 & 3$\times$3 & 1 & ReLU \\
10 & Conv & 512 & 3$\times$3 & 1 & ReLU \\
10 & MaxPool & 512 & 2$\times$2 & 2 & - \\
11 & Conv & 512 & 3$\times$3 & 1 & ReLU \\
12 & Conv & 512 & 3$\times$3 & 1 & ReLU \\
13 & Conv & 512 & 3$\times$3 & 1 & ReLU \\
13 & MaxPool & 512 & 2$\times$2 & 2 & - \\
14 & FC & 4096 & - & - & ReLU \\
15 & FC & 4096 & - & - & ReLU \\
16 & FC & 2622 & - & - & Softmax \\
\bottomrule

\end{tabular}
}
\end{table}

\para{Target-dependent Trigger Generation.} Figure~\ref{fig:trigger_opt} shows
samples of backdoor triggers generated by our attacks as discussed in \S\ref{sec:eval}.
The trigger mask is chosen to be a square-shaped pattern
located at the bottom right of each input image. The trigger generation
process maximizes the trigger effectiveness against $y_t$ by minimizing
the difference between poisoned non-target samples and
clean target samples described by eq. (\ref{eq:trigger_rev}). These generated
triggers are  used to inject latent backdoor into the Teacher
model. They are also used to  launch
misclassification attacks after any Student model is trained from the
infected Teacher model.

\begin{figure}[h]
  \centering
  \subfigure[\mnist]{
  	\includegraphics[width=0.42\textwidth]{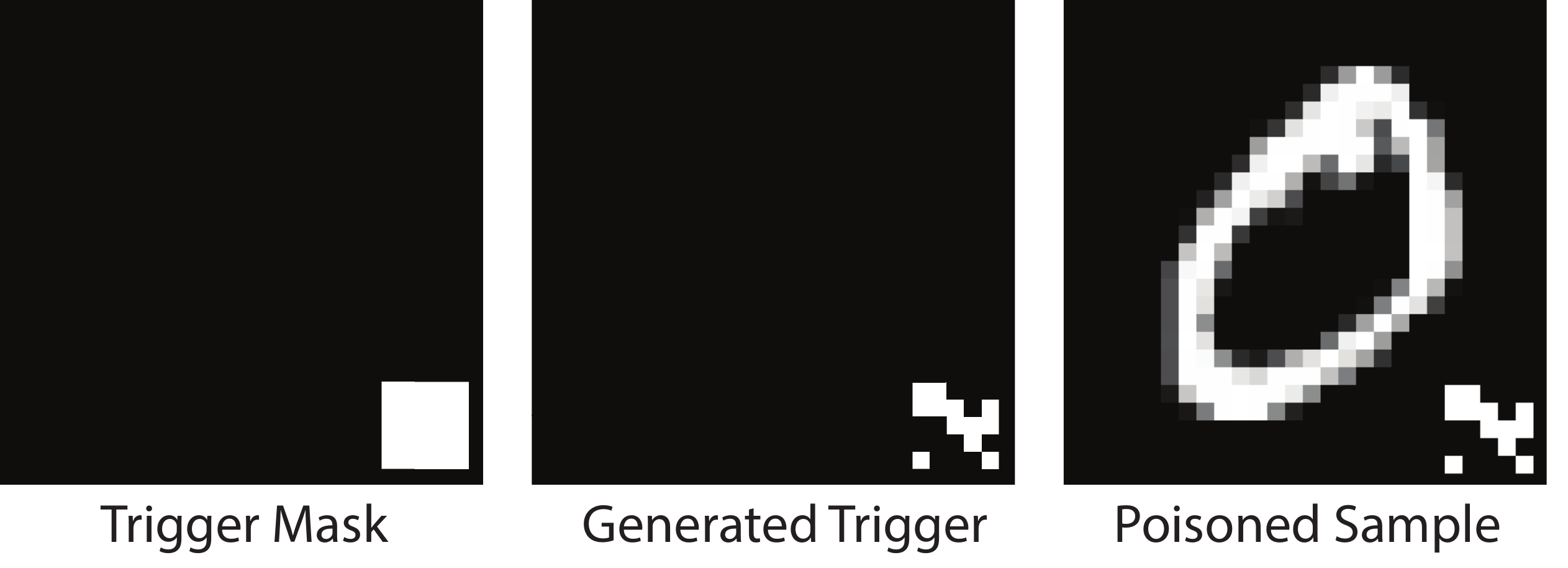}
  }
  \hfill
  \subfigure[\traffic]{
  	\includegraphics[width=.42\textwidth]{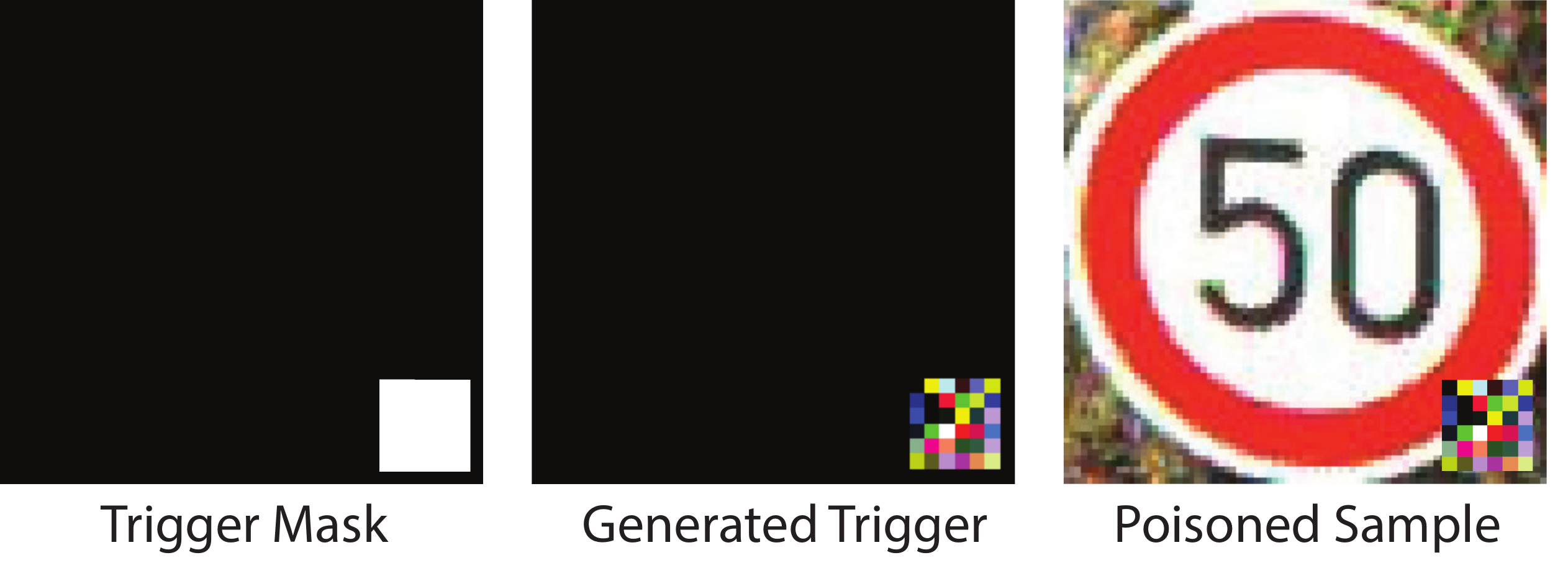}
  }
  \\
  \subfigure[\face]{
  	\includegraphics[width=.42\textwidth]{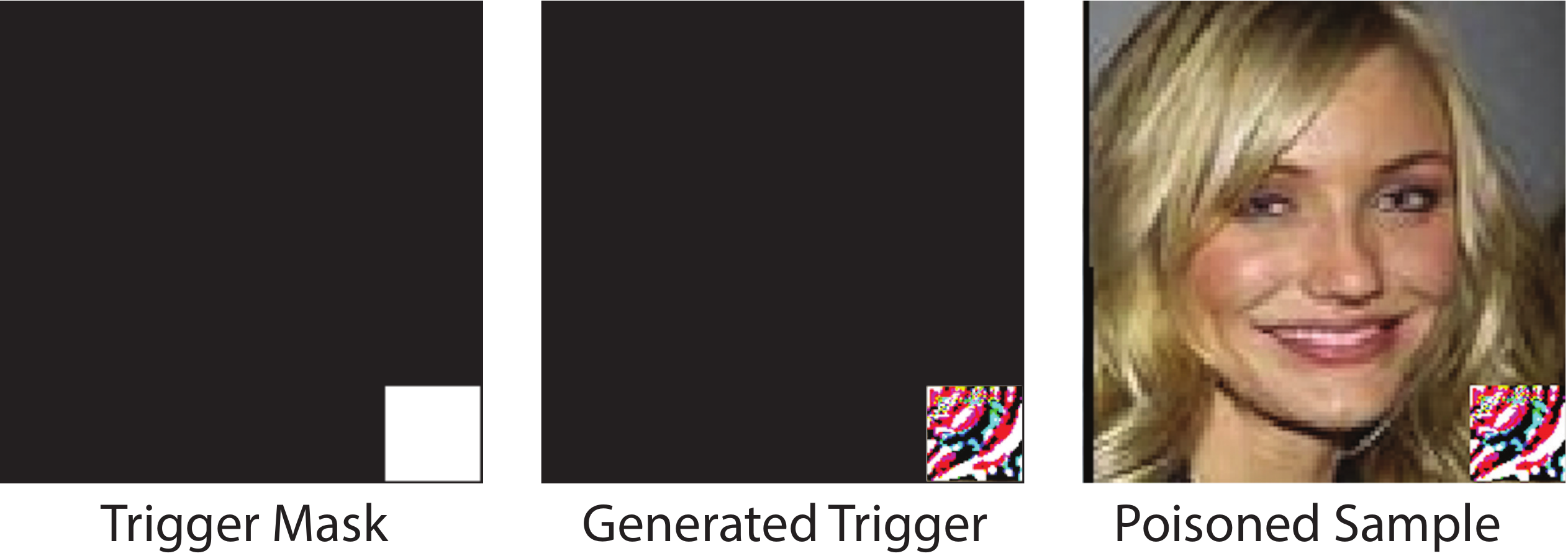}
  }
  \hfill
  \subfigure[\iris]{
  	\includegraphics[width=.42\textwidth]{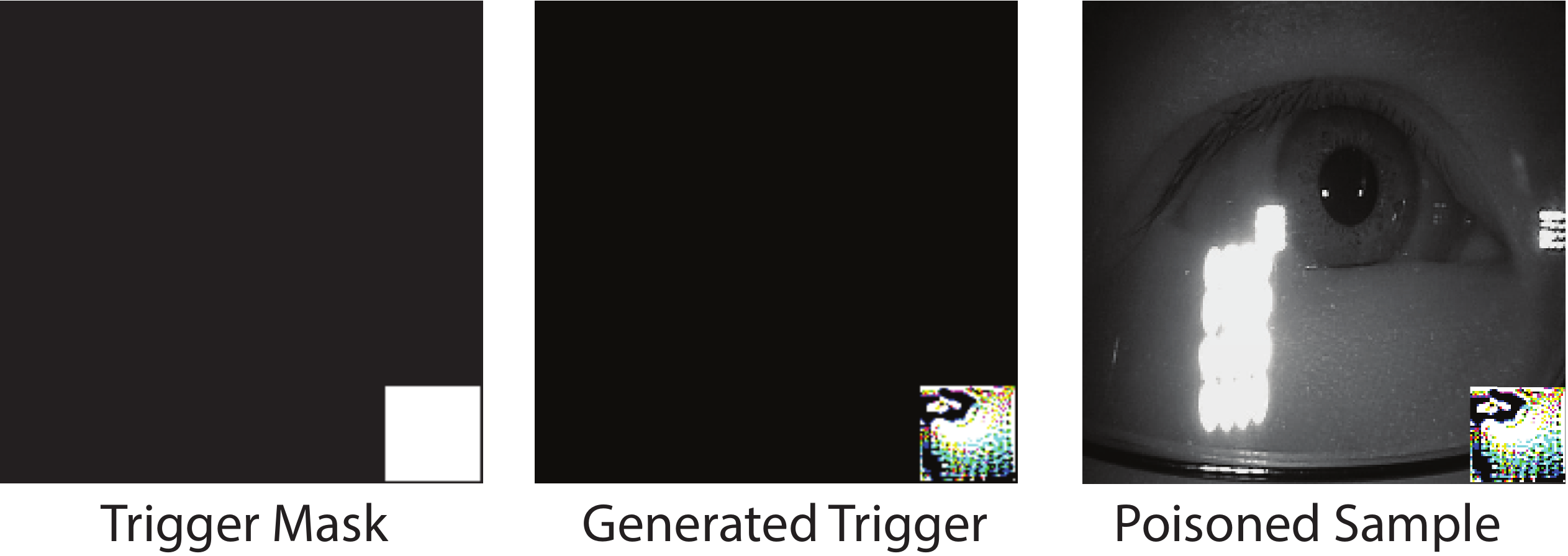}
      }
      \vspace{-0.1in}
  \caption{Samples of triggers produced by our attack and the
    corresponding poisoned images. }
  \label{fig:trigger_opt}
\end{figure}

\end{document}